\definecolor{codegreen}{rgb}{0,0.6,0}
\definecolor{codegray}{rgb}{0.5,0.5,0.5}
\definecolor{codepurple}{rgb}{0.58,0,0.82}
\definecolor{backcolour}{rgb}{0.95,0.95,0.92}
\lstdefinestyle{mystyle}{
    backgroundcolor=\color{backcolour},   
    commentstyle=\color{codegreen},
    keywordstyle=\color{magenta},
    numberstyle=\tiny\color{codegray},
    stringstyle=\color{codepurple},
    basicstyle=\ttfamily\footnotesize,
    breakatwhitespace=false,         
    breaklines=true,                 
    captionpos=b,                    
    keepspaces=true,                 
    numbers=left,                    
    numbersep=5pt,                  
    showspaces=false,                
    showstringspaces=false,
    showtabs=false,                  
    tabsize=2
}
\newtheorem{proposition}{Proposition}
\newtheorem{remark}{Remark}
\newtheorem{definition}{Definition}
\newtheorem{example}{Example}
\DeclareMathOperator{\E}{\mathbb{E}} 
\DeclareMathOperator{\ind}{\vmathbb{1}}  
\DeclarePairedDelimiterX{\infdivx}[2]{(}{)}{%
  #1\;\delimsize\|\;#2%
}
\DeclareMathOperator*{\argmin}{arg\,min}
\newcommand{\KL}{D_\textnormal{KL}}
\newcommand{\JS}{D_\textnormal{JS}}
\newcommand{\SED}{D_\textnormal{SED}}
\newcommand{\norm}[1]{\left\lVert#1\right\rVert}
\newcommand{\abs}[1]{\left\lvert#1\right\rvert}
\newcommand{\bX}{\mathbf{X}}
\newcommand{\bS}{\mathbf{S}}
\newcommand{\bv}{\mathbf{v}}
\newcommand{\btheta}{\bm{\theta}}
\newcommand{\ogamma}{\overline{\gamma}}
\title{\href{https://github.com/aida-ugent/fairret}{\textsc{fairret}}: a Framework for Differentiable Fairness Regularization Terms}
\author{Maarten Buyl\\
Ghent University\\
maarten.buyl@ugent.be
\And
MaryBeth Defrance\\
Ghent University\\
marybeth.defrance@ugent.be
\And
Tijl De Bie\\
Ghent University\\
tijl.debie@ugent.be}
\date{}
\begin{document}

\maketitle

\begin{abstract}
    Current fairness toolkits in machine learning only admit a limited range of fairness definitions and have seen little integration with automatic differentiation libraries, despite the central role these libraries play in modern machine learning pipelines.

    We introduce a framework of fairness regularization terms (\textsc{fairret}s) which quantify bias as modular, flexible objectives that are easily integrated in automatic differentiation pipelines. By employing a general definition of fairness in terms of linear-fractional statistics, a wide class of \textsc{fairret}s can be computed efficiently. Experiments show the behavior of their gradients and their utility in enforcing fairness with minimal loss of predictive power compared to baselines. Our contribution includes a PyTorch implementation of the \textsc{fairret} framework.
\end{abstract}

\section{Introduction}

Many machine learning \textit{fairness} methods aim to enforce mathematical formalizations of non-discrimination principles \citep{mehrabiSurveyBiasFairness2021}, often by requiring statistics to be equal between groups \citep{agarwalReductionsApproachFair2018}. For example, we may require that men and women receive positive decisions at equal rates in binary classification \citep{dworkFairnessAwareness2012}. The main interest in fairness tools is to meet such constraints without destroying the accuracy of the ML model.

A large class of these fairness tools utilizes \textit{regularization terms}, i.e. quantifications of unfairness that can be added to the existing error term of an unfair ML model \citep{kamishimaFairnessAwareClassifierPrejudice2012,berkConvexFrameworkFair2017,zafarFairnessConstraintsFlexible2019,padalaFNNCAchievingFairness2021,padhAddressingFairnessClassification2021,buylOptimalTransportClassifiers2022}. The modularity of such loss terms appears to align well with the paradigm of automatic differentiation libraries like PyTorch \citep{paszkePyTorchImperativeStyle2019}, which have become the bedrock of modern machine learning pipelines. However, the practical use of this modularity has seen little interest thus far.

\paragraph{Contributions}
Hence, we formalize a framework of fairness regularization terms (\textbf{\textsc{fairret}s}) that consolidates research in differentiable fairness methods. A \textsc{fairret} quantifies a model's unfairness as a single value that is minimized like any other objective through automatic differentiation. 

We implement two types of \textsc{fairret}s: \textsc{fairret}s that directly penalize the \textit{violation} of fairness constraints and \textsc{fairret}s that minimize the distance between a model and its \textit{projection} onto the set of fair models. These \textsc{fairret}s support any fairness notion defined through linear-fractional statistics \citep{celisClassificationFairnessConstraints2019}, which is a far wider range than the exclusively linear statistics typically considered in literature \citep{zafarFairnessConstraintsFlexible2019,agarwalReductionsApproachFair2018}. Moreover, our framework generalizes to the simultaneous handling of multiple sensitive traits and (a weaker form of) fairness with respect to continuous sensitive variables. By design, \textsc{fairret}s are both modular and extensible such that future work that can benefit from their wide applicability. Appendix~\ref{app:code} contains a code example.

We visualize the \textsc{fairret}s' gradients and evaluate their empirical performance in enforcing fairness notions compared to baselines. We infer this is far more difficult for fairness notions with linear-fractional statistics, which were rarely studied in prior work, than those with linear statistics. 

The framework is available as a package at \url{https://github.com/aida-ugent/fairret}.

\paragraph{Related Work}
Fairness tools are classified as \textit{preprocessing}, \textit{inprocessing} or \textit{postprocessing} \citep{mehrabiSurveyBiasFairness2021}. \textsc{fairret}s perform inprocessing, as they are minimized during training. 

A popular approach to fairness regularization is to penalize the violation of fairness constraints \citep{zemelLearningFairRepresentations2013, padalaFNNCAchievingFairness2021,wickUnlockingFairnessTradeoff2019}, which we formalize as a \textsc{fairret}. We also take inspiration from postprocessing methods that project classifiers onto a fair set \citep{alghamdiModelProjectionTheory2020,weiOptimizedScoreTransformation2020} and penalize the cost of this projection \citep{buylKLDivergenceGraphModel2021} as a \textsc{fairret}. Fair representation learning \citep{mcnamaraCostsBenefitsFair2019,Oneto_Donini_luise_2020,Franco_2022318} finds intermediate representations that minimally contain sensitive information. An example is the adversarial approach of \citet{adelOneNetworkAdversarialFairness2019}, which is a baseline in our experiments. 

\citet{celisClassificationFairnessConstraints2019} observed that many fairness definitions express a parity between linear-fractional statistics. They propose a meta-algorithm to find optimal classifiers that satisfy this constraint. Instead, we employ a simpler (yet sufficiently expressive) linear-fractional form and propose an algorithm to use them in the construction of linear constraints that does not require a meta-algorithm. 

Popular fairness toolkits such as Fairlearn \citep{bird2020fairlearn} and AIF360 \citep{aif360_oct_2018} expect the underlying model in the form of \textit{scikit-learn Estimators}\footnote{\url{https://scikit-learn.org/1.3/developers/develop.html} describes these \textit{Estimators}} that can be retrained at-will in fairness meta-algorithms. Instead, our proposed \textsc{fairret}s act as a loss term that can simply be added \textit{within} a training step. The aforementioned toolkits have some integration with automatic differentiation libraries in adversarial fairness approaches \citep{zhangMitigatingUnwantedBiases2018}, yet these still require full control over the training process and lack generality in the fairness notions they can enforce.

Two PyTorch-specific projects with similar goals as our paper are FairTorch \citep{FairTorch_2023} and the Fair Fairness Benchmark (FFB) \citep{hanFFBFairFairness2023}. However, neither present a formal framework and both only support a limited range of fairness definitions.

\section{Fairness in Binary Classification}
In fair binary classification, we are provided with random variables $(\bX, \bS, Y)$ with $\bX \in \mathbb{R}^{d_x}$ the feature vector of an individual, $\bS \in \mathbb{R}^{d_s}$ their \textit{sensitive} feature vector and $Y \in \{0, 1\}$ the binary output label. In what remains, all expectations are taken over the joint distribution of $(\bX, \bS, Y)$.

The goal is to learn a classifier $f$ such that its predictions $f(\bX)$ match $Y$ while avoiding discrimination with respect to $\bS$. In this section, we will assume $f$ directly provides binary decisions, i.e. $f: \mathbb{R}^{d_x} \to \{0, 1\}$, as this is expected in traditional formalizations of fairness. However, since such `hard' classifiers are not differentiable, we will instead be learning probabilistic classifiers in Sec.~\ref{sec:fairret}.

Further note that our definition of sensitive features $\bS$ as real-valued and $d_s$-dimensional vectors is a generalization of typical fairness definitions which assume a categorical (or binary) domain for sensitive features \citep{vermaFairnessDefinitionsExplained2018}. We will one-hot encode such categorical traits, e.g. by encoding `white' or `non-white' as the vectors $\bS = (1, 0)^\top$ and $\bS = (0, 1)^\top$ respectively. Our generalization allows us to take multiple non-exclusive sensitive traits into account by mapping them to different values $S_k$ in the same vector $\bS$ for $k \in [d_s] = \{0, ..., d_s - 1\}$. Additionally, by letting $S_k \in \mathbb{R}$, we allow soft specifications of identity rather than requiring hard discretization. 


\subsection{Partition Fairness}\label{sec:partition}
Though we will allow any feature vector $\bS \in \mathbb{R}^{d_s}$ in our framework, popular fairness definitions require every person to belong to exactly one demographic group. We call this \textit{partition fairness}.

\begin{definition}\label{def:part_fair}
    In \textbf{partition fairness}, $\bS$ is a one-hot encoding, i.e. $S_k \in \{0, 1\}$ and $\sum_{k \in [d_s]} S_k = 1.$
\end{definition}


\begin{example}\label{ex:dp}
    A straightforward, popular definition in partition fairness is \textit{Demographic Parity} (DP), also known as statistical parity \citep{dworkFairnessAwareness2012,vermaFairnessDefinitionsExplained2018}. It enforces
    \begin{equation} 
        \forall k \in [d_s]: P(f(\bX) = 1 \mid S_k = 1) = P(f(\bX) = 1)
    \end{equation}
    which states that all groups ought to get positive predictions at the same rate (i.e. the overall rate).

    Let $\gamma(k; f) \triangleq \frac{\E[S_k f(\bX)]}{\E[S_k]}$. It is easily shown that $\gamma(k; f) = P(f(\bX) = 1 \mid S_k = 1)$. Thus also
    \begin{equation}
        P(f(\bX) = 1 \mid S_k = 1) = P(f(\bX) = 1) \iff \gamma(k; f) = \E[f(\bX)].
    \end{equation}
\end{example}

In Example~\ref{ex:dp}, fairness is formalized by requiring a statistic $\gamma$ to be equal across groups. This principle can be generalized to a wide class of parity-based fairness notions. In particular, we consider those expressed through \textit{linear-fractional} statistics \citep{celisClassificationFairnessConstraints2019}.

\begin{definition}\label{def:lf}
    A \textbf{linear-fractional} statistic $\gamma$ computes values $\gamma(k; f) \in \mathbb{R}$ for sensitive variable $S_k$ and classifier $f: \mathbb{R}^{d_x} \to \{0, 1\}$. We assume $\gamma$ is differentiable with respect to $f$. It takes the form
\begin{equation}
    \gamma(k; f) = \frac{\E[S_k (\alpha_0(\bX, Y) + f(\bX) \beta_0(\bX, Y))]}{\E[S_k (\alpha_1(\bX, Y) + f(\bX) \beta_1(\bX, Y))]}
\end{equation}
with $\alpha_0$, $\alpha_1$, $\beta_0$, and $\beta_1$ all functions that do not depend on $\bS$ or $f$. Let $\Gamma$ denote all such statistics.
Also, let $\ogamma(f) \triangleq \frac{\E[\alpha_0(\bX, Y) + f(\bX) \beta_0(\bX, Y)]}{\E[\alpha_1(\bX, Y) + f(\bX) \beta_1(\bX, Y)]}$ denote the overall statistic value without conditioning on $\bS$.
\end{definition}


\begin{definition}\label{def:notion}
    A \textbf{fairness notion} is expressed through a statistic $\gamma \in \Gamma$. The set $\mathcal{F}_\gamma$ of classifiers that adhere to the fairness notion is defined as
\begin{equation}
    \mathcal{F}_\gamma \triangleq \left\{f: \mathbb{R}^{d_x} \to \{0, 1\} \mid \forall k \in [d_s]: \gamma(k; f) = \ogamma(f) \right\}
\end{equation}
i.e. the statistic $\gamma(k; f)$ for each $S_k$ equals the overall statistic $\ogamma(f)$.
\end{definition}

Indeed, the DP fairness notion in Example~\ref{ex:dp} is expressed as a fairness notion as defined in Def.~\ref{def:notion} with linear-fractional statistics as defined in Def.~\ref{def:lf}. The same holds for the following notions. 

\begin{example}\label{ex:eo}
\textit{Equalized Opportunity} (EO) \citep{hardtEqualityOpportunitySupervised2016} only computes DP for actual positives $Y = 1$. Its statistic $\gamma$ is thus the recall $P(f(\bX) = 1 \mid Y = 1, S_k = 1)$, i.e. $\gamma(k; f) = \frac{\E[S_k f(\bX)Y]}{\E[S_k Y]}$.
\end{example}

\begin{example}\label{ex:pp}
\textit{Predictive Parity} (PP) \citep{chouldechovaFairPredictionDisparate2017}, which compares the precision statistic $P(Y = 1 \mid f(\bX) = 1, S_k = 1)$, i.e. $\gamma(k; f) = \frac{\E[S_k f(\bX) Y]}{\E[S_k f(\bX)]}$.
\end{example}

\begin{example}\label{ex:te}
\textit{Treatment Equality} (TE) \citep{Berk2021} balances the ratios of false negatives over false positives, i.e. $\gamma(k; f) = \frac{\E[S_k (1 - f(\bX)) Y]}{\E[S_k f(\bX)(1 - Y)]}$. Unlike the other notions, its $\gamma$ is not a probability.
\end{example}

Table~\ref{tab:fairalphabeta} summarizes the $\alpha$ and $\beta$ functions of several fairness notions \citep{vermaFairnessDefinitionsExplained2018} with linear-fractional statistics. Their derivations are found in Appendix~\ref{app:alphabeta}.

\begin{table}
\centering
\caption{Fairness definitions and their $\alpha$ and $\beta$ functions. Conditional Demographic Parity encompasses many notions with an arbitrary function $\zeta$ conditioned on the input $\bX$.}
\label{tab:fairalphabeta}
\begin{tabular}{lcccc}
Fairness Definition & $\alpha_0$ & $\beta_0$ & $\alpha_1$ & $\beta_1$ \\\hline 
Demographic Parity \citep{dworkFairnessAwareness2012} & 0 & 1 & 1 & 0 \\
Conditional Demographic Parity \citep{Wachter_Mittelstadt_Russell_2020} & 0 & $\zeta(\bX)$ & $\zeta(\bX)$ & 0 \\
Equal Opportunity \citep{hardtEqualityOpportunitySupervised2016} & 0 & Y & Y & 0 \\
False Positive Parity \citep{hardtEqualityOpportunitySupervised2016} & 0 & 1 - Y & 1 - Y & 0 \\
Predictive Parity \citep{chouldechovaFairPredictionDisparate2017} & 0 & Y & 0 & 1 \\
False Omission Parity & Y & -Y & 1 & -1 \\
Accuracy Equality \citep{Berk2021} & 1 - Y & 2Y - 1 & 1 & 0 \\
Treatment Equality \citep{Berk2021} & Y & -Y & 0 & 1 - Y \\
\end{tabular}
\end{table}

\begin{definition}
A linear-fractional statistic $\gamma \in \Gamma$ is \textbf{linear} when $\beta_1(\bX, Y) \equiv 0$.

Let $\Gamma_\textnormal{L} \subset \Gamma$ denote the set of all linear statistics.
\end{definition}

Fairness notions with linear statistics $\gamma \in \Gamma_\textnormal{L}$ are thus identified in Table~\ref{tab:fairalphabeta} by checking the column for $\beta_1$. Such notions are especially useful because the fairness constraint in Def.~\ref{def:notion} is easily written as a linear constraint over classifier $f$. In turn, this makes the set of fair classifiers $\mathcal{F}_\gamma$ a convex set, which leads to convex optimization problems \citep{boydConvexOptimization2004}. Thus, the constrained optimization of $f$ can be efficiently performed if $f$ is itself linear \citep{zafarFairnessConstraintsFlexible2019}. 

However, fairness notions with linear-fractional statistics $\gamma \in \Gamma \setminus \Gamma_\textnormal{L}$ do not directly lead to linear constraints in Def.~\ref{def:notion}. To facilitate optimization, we therefore propose to narrow the set of fair classifiers $\mathcal{F}_\gamma$ to the subset where the statistics are all equal in a particular value $c$.

\begin{definition}
Fix a $c \in \mathbb{R}$. A \textbf{$c$-fixed fairness notion} is expressed through a linear-fractional statistic $\gamma \in \Gamma$ such that the set $\mathcal{F}_\gamma(c)$ of classifiers $f$ that adhere to the fairness notion is defined as
\begin{equation}
    \mathcal{F}_\gamma(c) \triangleq \left\{f: \mathbb{R}^{d_x} \to \{0, 1\} \mid \forall k \in [d_s]: \gamma(k; f) = c \right\}.
\end{equation}
\end{definition}

\begin{proposition}\label{prop:linear_fixed}
    With $\gamma \in \Gamma$, the $c$-fixed fairness notion $\mathcal{F}_\gamma(c)$ enforces \textbf{linear} constraints:
    \begin{equation}
        \gamma(k; f) = c \iff \E[S_k \mleft(\alpha(\bX, Y, c) + f(\bX)\beta(\bX, Y, c)\mright)] = 0
    \end{equation}
    where $\alpha(\bX, Y, c) = \alpha_0(\bX, Y) - c \alpha_1(\bX, Y)$ and $\beta(\bX, Y, c) = \beta_0(\bX, Y) - c \beta_1(\bX, Y)$.
\end{proposition}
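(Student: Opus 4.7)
The proposition is essentially an algebraic manipulation, so the plan is to clear the denominator in the definition of $\gamma(k;f)$ and rearrange. First I would start from Def.~\ref{def:lf}, which gives
\[
\gamma(k;f) = \frac{\E[S_k(\alpha_0(\bX,Y) + f(\bX)\beta_0(\bX,Y))]}{\E[S_k(\alpha_1(\bX,Y) + f(\bX)\beta_1(\bX,Y))]},
\]
and note that provided the denominator is nonzero, the equation $\gamma(k;f) = c$ is equivalent to
\[
\E[S_k(\alpha_0(\bX,Y) + f(\bX)\beta_0(\bX,Y))] = c\,\E[S_k(\alpha_1(\bX,Y) + f(\bX)\beta_1(\bX,Y))].
\]

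Next I would move everything to one side and collapse the two expectations via linearity of expectation and the fact that $c$ is a fixed scalar, producing
\[
\E\bigl[S_k\bigl((\alpha_0(\bX,Y) - c\,\alpha_1(\bX,Y)) + f(\bX)(\beta_0(\bX,Y) - c\,\beta_1(\bX,Y))\bigr)\bigr] = 0,
\]
which is exactly the claimed form after substituting the definitions of $\alpha(\bX,Y,c)$ and $\beta(\bX,Y,c)$. Finally I would point out that since $c$ is fixed and $\alpha$, $\beta$ depend only on $(\bX,Y)$ (not on $f$ or $\bS$), the resulting constraint is indeed linear in $f$, as $f$ appears only multiplicatively inside a fixed weight inside an expectation.

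The only real subtlety, and thus the main obstacle, is the well-posedness of the denominator: the equivalence only holds when $\E[S_k(\alpha_1(\bX,Y) + f(\bX)\beta_1(\bX,Y))] \neq 0$. I would include a short remark observing that for all the concrete instantiations in Table~\ref{tab:fairalphabeta}, this denominator equals a probability such as $P(S_k = 1)$, $P(S_k = 1, Y = 1)$, or $P(S_k = 1, f(\bX) = 1)$, which is nonzero whenever the corresponding group (and, if relevant, its positive-label or positively-predicted subgroup) is nonempty — the same non-degeneracy that is implicitly required even for $\gamma(k;f)$ to be defined in the first place. Under that standing assumption, the equivalence is an if-and-only-if, completing the proof.
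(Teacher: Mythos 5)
Your proof is correct and follows essentially the same route as the paper's: clear the denominator, invoke linearity of expectation with the fixed scalar $c$, and observe that $\alpha$ and $\beta$ do not depend on $f$. Your added remark on the non-vanishing of the denominator is a welcome precision that the paper's own proof leaves implicit.
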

Using Prop.~\ref{prop:linear_fixed}, we can still obtain linear constraints for fairness notions $\mathcal{F}_\gamma$ with linear-fractional statistics $\gamma \in \Gamma \setminus \Gamma_\textnormal{L}$ by considering their $c$-fixed variant $\mathcal{F}_\gamma(c)$ instead. This sacrifices a degree of freedom because statistics $\gamma(k; f)$ are no longer allowed to be equal for any overall statistic $\ogamma(f)$, they must now do so for the specific case where $\ogamma(f) = c$. However, there are $c$ values that still lead to interesting sets $\mathcal{F}_\gamma(c)$. In the \textsc{fairret}s we propose, we take an unfair classifier $h$ and fix $c = \ogamma(h)$ to construct the set of all \textit{fair} classifiers $\mathcal{F}_\gamma(\ogamma(h))$ that would result from a fair redistribution of scores in $h$ over the sensitive groups.

Though Prop.~\ref{prop:linear_fixed} is inspired by \citet{celisClassificationFairnessConstraints2019}, our use of this result vastly differs. Instead of fixing the statistics to a single value $c$, they set many pairs of upper and lower bounds for each group's statistics, giving rise to as many optimization programs. They then propose a meta-algorithm that searches the best classifier over each of these programs. A meta-algorithm is not necessary in our framework, as we will allow $c$ to evolve during training. While we have no formal convergence guarantees for this approach, empirical results show it works well in practice. 




\subsection{Beyond Partition Fairness}\label{sec:beyond_part}
Having firmly rooted our definitions in partition fairness (Def.~\ref{def:part_fair}), we now abandon its assumptions. First, we allow $S_k \in \mathbb{R}$. Second, we extend to multiple sensitive features with $\sum_{k} S_k \in \mathbb{R}$.

\subsubsection{Continuous Sensitive Values}\label{sec:cont_sens}
Admitting continuous values for someone's sensitive trait, i.e. $S_k \in \mathbb{R}$ allows us to take naturally continuous features, such as age, into account. Also, it provides an opportunity for an imprecise specification of demographic group membership. 

For instance, instead of exactly knowing the gender of an individual, we may only have a probability available, e.g. because it is noisily predicted by a third-party classifier, or to protect the individual's privacy. By allowing $S_k \in (0, 1)$, the attribute $S_k$ could then express `woman-ness' instead of a binary `woman' or `not woman'. Thus, we also allow individuals to themselves quantify how strongly they identify with a group, rather than requiring a binary membership.

Our notation already generalizes to non-binary $S_k$ values; they can simply be filled in for linear-fractional statistics $\gamma \in \Gamma$ as defined in Def.~\ref{def:lf}. Fairness as formalized in Def.~\ref{def:notion} can then still be enforced through $\gamma(k; f) = \ogamma(f)$. 

\begin{remark}\label{rem:cont}

    Partition fairness constraints stem from the principle of treating distinct groups equally. This does not directly apply for a non-binary $S_k$. For example, if there is only one, continuous sensitive variable $(S_0) = \bS$ such as the age of an individual, then we cannot compare $\gamma(0; f)$ to another group's statistics. Instead, $\gamma(0; f)$ must be compared to a value independent of $\bS$.
    
    Enforcing $\gamma(k; f) = \ogamma(f)$ is then a sensible choice, as it satisfies key properties one can expect from a fairness measure. First, the constraint is met when $S_k \equiv s$, i.e. when $S_k$ is a deterministic constant. Second, it holds if $S_k$ has no linear influence on the numerator and denominator of $\gamma$, i.e.
    \begin{equation*}
        \textnormal{cov}(S_k, \alpha_0(\bX, Y) + f(\bX) \beta_0(\bX, Y)) = \textnormal{cov}(S_k, \alpha_1(\bX, Y) + f(\bX) \beta_1(\bX, Y)) = 0 \implies \gamma(k; f) = \ogamma(f).
    \end{equation*}
    For a full derivation of this result, we refer to Appendix~\ref{app:cont}.
\end{remark}


\subsubsection{Multiple Axes of Discrimination}
By allowing $\sum_{k} S_k \in \mathbb{R}$, we support that $\bS$ contains information about people from several sensitive traits, e.g. gender, ethnicity, and religion. Because these each form a possible axis of discrimination, we can `sum' these sources of discrimination by combining the constraints. 

For example, if pairs of sensitive features $(S_0, S_1)$ and $(S_2, S_3)$ each partition the dataset, then fairness requires both $\gamma(0; f) = \gamma(1; f) = \ogamma(f)$ and $\gamma(2; f) = \gamma(3; f) = \ogamma(f)$. Combined, these constraints make up the fairness definition in Def.~\ref{def:notion}. The use of one-hot notations for sensitive values thus already allows us to combine axes of discrimination for categorical sensitive traits.


\begin{remark}\label{rem:intersect}
An important limitation is that we only view fairness separately per axis of discrimination. Outside the partition fairness setting, this means that some intersections of sensitive groups, e.g. `black woman', will not be represented in the constraints that enforce fairness with respect to `black' and `woman' separately \citep{kearnsPreventingFairnessGerrymandering2018}. A toy example is given in Appendix~\ref{app:intersect}.
\end{remark}

\section{Fairness Regularization Terms}\label{sec:fairret}
The popular approach to modern machine learning is to construct pipelines consisting of modular, parameterized components that are differentiable from the objective to the input. We therefore use \textit{probabilistic} classifier models $h: \mathbb{R}^{d_x} \to (0, 1)$ from now on, where decisions are sampled from a Bernoulli distribution with parameter $h(\bX)$. Let $\mathcal{H}$ denote the hypothesis class of these models.

\begin{remark}\label{rem:probabilistic}
    Fairness statistics $\gamma(k; h)$ over the output of a \textnormal{probabilistic} classifier $h$ only approximately verify their respective fairness notions, as these were only defined for \textnormal{hard} classifiers with a binary output \citep{lohausTooRelaxedBe2020}. In Appendix~\ref{app:probabilistic}, we discuss the impact of this approximation and how its fidelity can be traded-off with the quality of the gradient of $\gamma(k; h)$ with respect to $h$.
\end{remark}

In binary classification, we minimize a loss $\mathcal{L}_Y(h)$ over the probabilistic classifier $h$ given output labels $Y$, e.g. the cross-entropy. In \textit{fair} binary classification we additionally pursue $h \in \mathcal{F}_\gamma$:
\begin{equation}\label{eq:obj_constrained}
    \min_{h \in \mathcal{F}_\gamma} \mathcal{L}_Y(h) .
\end{equation}
For linear-fractional statistics, the constraint is linear when considering the $c$-fixed variant of $\mathcal{F}_\gamma$ (using Prop.~\ref{prop:linear_fixed}). However, for non-convex models $h$, the constrained optimization of $h$ will remain non-convex as well. In the general case, we thus relax $h \in \mathcal{F}_\gamma$ and instead incur a cost to $h \notin \mathcal{F}_\gamma$.

\begin{definition}
    A \textbf{fairness regularization term} (\textbf{\textsc{fairret}}) $R_\gamma(h): \mathcal{H} \to \mathbb{R}_{\geq 0}$ quantifies the unfairness of the model $h \in \mathcal{H}$ with respect to the fairness notion defined through statistic $\gamma$. 

    A \textsc{fairret} is \textbf{strict} if it holds that $h \in \mathcal{F}_\gamma \iff R_\gamma(h) = 0$.
\end{definition}

The objective in Eq.~(\ref{eq:obj_constrained}) is then relaxed as
\begin{equation}\label{eq:obj_relaxed}
    \min_{h} \mathcal{L}_Y(h) + \lambda R_\gamma(h)
\end{equation}
with $\lambda$ a hyperparameter. The objective in Eq.~(\ref{eq:obj_relaxed}) is equivalent to  Eq.~(\ref{eq:obj_constrained}) for $\lambda \to \infty$ if $R_\gamma$ is strict.

\begin{remark}
    We call $R_\gamma$ a regularization term, yet its purpose is \textnormal{not} to reduce model complexity or improve generalization performance, in contrast to traditional regularization in machine learning \citep{kukackaRegularizationDeepLearning2017}. Instead, we aim to limit the hypothesis class of $h$ to the set of fair classifiers. 
\end{remark}


In what follows, we introduce two archetypes of \textsc{fairret}s: \textit{violation} and \textit{projection}. We visualize $\nabla_h R_\gamma$ for each \textsc{fairret} in Fig.~\ref{fig:grads} with $\gamma$ the positive rate statistic (thereby enforcing DP).

\begin{figure}[tb]
    \centering
    \begin{subfigure}{\textwidth}
        \centering
        \includegraphics[height=0.8cm]{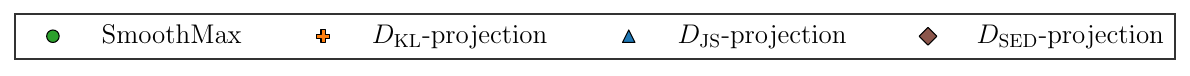}
    \end{subfigure}
    
    \begin{subfigure}{\textwidth}
        \centering
        \includegraphics[width=.8\linewidth]{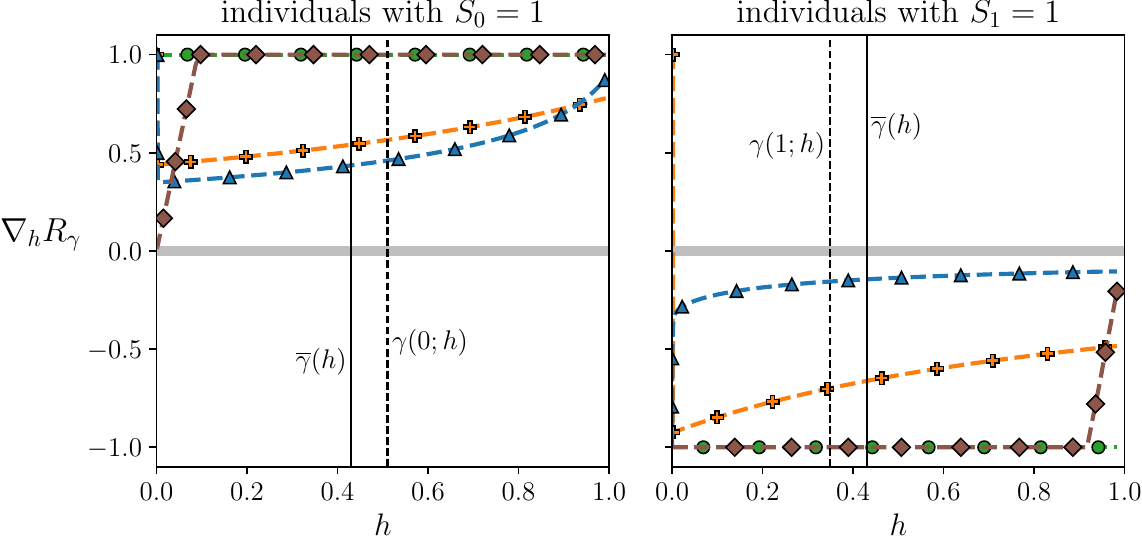}
    \end{subfigure}
    \caption{The model $h$ was trained on the \textit{ACSIncome} dataset without \textsc{fairret} (i.e. $\lambda = 0$) and ends up with disparate positive rates $\gamma(0; h) > \ogamma(h) > \gamma(1; h)$ for the one-hot encoded sensitive variables $(S_0, S_1)$.
     These should be brought closer to the overall positive rate $\ogamma(f)$. We show probability scores $h$ and the gradients\footref{note:grad_h} of several \textsc{fairret}s $R_\gamma$ with respect to $h$. The gradients are normalized by dividing them by their maximum absolute value per \textsc{fairret} and per group. They are positive for samples with $S_0 = 1$, implying their scores should decrease, and vice versa for $S_1 = 1$.}
    \label{fig:grads}
    \end{figure}

\subsection{Violation \textsc{FAIRRET}s}\label{sec:violation}
To quantify $h \notin \mathcal{F}_\gamma$, we can start from the \textit{violation} $\bv(h)$ of the constraint that defines $\mathcal{F}_\gamma$:
\begin{equation}\label{eq:violation}
    \bv_k(h) = \abs{\frac{\gamma(k; h)}{\ogamma(h)} - 1}
\end{equation}
with $\bv: \mathcal{H} \to \mathbb{R}^{d_s}$ a vector-valued function with components $\bv_k$. Clearly, $\bv(h) = \bm{0} \iff h \in \mathcal{F}_\gamma$. 

Note that $\bv(h)$ is normalized\footnote{In cases where $\ogamma(h)= 0$, we can simply use $\bv_k(h) = \abs{\gamma(k; h)}$ instead. We assume $h(\bX) \in (0, 1)$, so this only occurs in degenerate cases for the notions in Table~\ref{tab:fairalphabeta} (like when all $Y = 0$ for Equal Opportunity).} by $\ogamma(h)$ such that a classifier cannot minimize $\bv(h)$ by uniformly downscaling its statistics $\gamma$ without reducing relative differences between groups \citep{celisClassificationFairnessConstraints2019}. 

\begin{definition}\label{def:norm}
    We define the \textbf{Norm} \textsc{fairret} as $R_\gamma(h) \triangleq \norm{\bv(h)}$, with $\norm{\cdot}$ a norm over $\mathbb{R}^{d_s}$.
\end{definition}

Many variants of the Norm \textsc{fairret} have been proposed, e.g. by \citet{zemelLearningFairRepresentations2013}, \citet{padalaFNNCAchievingFairness2021}, \citet{wickUnlockingFairnessTradeoff2019} and \citet{chuangFAIRMIXUPFAIRNESS2020}. 
However, fairness evaluation metrics often only consider the maximal violation. Hence, we propose the SmoothMax variant.

\begin{definition}\label{def:smoothmax} 
    We define the \textbf{SmoothMax} \textsc{fairret} as $R_\gamma(h) \triangleq \log{\sum_{k \in [d_s]} \exp(\bv_k(h))} - \log{d_s}$
\end{definition}
Because the SmoothMax performs the log-sum-exp operation over the violation, it can be considered a smooth approximation of the maximum. We subtract $\log{d_s}$ to ensure the \textsc{fairret} is strict.

Generally, violation \textsc{fairret}s can be characterized as functions of the violation $\bv(h)$. This lends them interpretability, but it also means that the gradient\footnote{\label{note:grad_h}There is some abuse of notation here. When taking the gradient or Jacobian with respect to $h$, we take it with respect to the vector of $n$ outputs of $h$ for a set of $n$ input features sampled from the distribution over $\bX$.} $\nabla_h R_\gamma$ decomposes as 
\begin{equation}\label{eq:violation_grad}
    \nabla_h R_\gamma = \mleft(\frac{\partial \bv}{\partial h}\mright)^\top \nabla_{\bv} R_\gamma
\end{equation}
with $\frac{\partial \bv}{\partial h}$ the Jacobian\footref{note:grad_h} of $\bv(h)$. The gradients of violation \textsc{fairret}s $R_\gamma$ thus only differ in the $\nabla_{\bv} R_\gamma$ gradient. Hence, the Norm \textsc{fairret} is excluded from Fig.~\ref{fig:grads} because its gradients equal those of SmoothMax after normalization. Figure~\ref{fig:grads} also suggests that violation \textsc{fairret}s convey little information on how each individual $h(\bX)$ score should be modified. Instead, they merely direct scores to uniformly increase or decrease within each group.

\subsection{Projection \textsc{FAIRRET}s}\label{sec:projection}
Recent postprocessing approaches to fairness redistribute all individual probability scores of a model $h(\bX)$ to a fair scores vector with a minimal loss in predictive power. For example, \citet{alghamdiModelProjectionTheory2020} project the scores onto the fair set $\mathcal{F}_\gamma$ as a postprocessing step. Yet, the cost of this projection can be seen as a quantification of unfairness that may be minimized as a \textsc{fairret} during training. 

Given a statistical divergence or distance $D$, we can generally define such a \textit{projection} \textsc{fairret} as
\begin{equation}\label{eq:proj_fairret}
    R_\gamma(h) \triangleq \min_{f \in \mathcal{F}_\gamma(\ogamma(h))} \E[D\infdivx{f(\bX)}{h(\bX)}].
\end{equation}

Importantly, we do not project $h$ onto the general fair set $\mathcal{F}_\gamma$, but on the $c$-fixed subset $\mathcal{F}_\gamma(c)$ with $c = \ogamma(h)$. The $c$-fixing is done such that the projection only requires linear constraints for linear-fractional statistics (see Prop.~\ref{prop:linear_fixed}). A projection \textsc{fairret} (Eq.~\ref{eq:proj_fairret}) is then a convex optimization problem if we limit ourselves to a $D$ that is convex with respect to $f$, which is the case for all projections discussed here. In particular, we $c$-fix to the overall statistic $\ogamma(h)$ of $h$ because this ensures $h$ can always be projected onto itself if it is already fair, as then $h \in \mathcal{F}_\gamma(\ogamma(h))$.

\begin{definition}\label{def:kl}
    The \textbf{$\KL$-projection} uses the binary Kullback-Leibler divergence
    \begin{equation}
        \KL\infdivx{f(\bX)}{h(\bX)} \triangleq f(\bX)\log\frac{f(\bX)}{h(\bX)} + (1 - f(\bX))\log\frac{1 - f(\bX)}{1 - h(\bX)}.
    \end{equation}
\end{definition}

The $\KL$-divergence is both a Csiszar divergence 
and a Bregman divergence \citep{amariDivergenceUniqueBelonging2009}. Also, the cross-entropy error minimized in $\mathcal{L}_Y(h)$ equals $\KL\infdivx{Y}{h(\bX)}$ up to a constant. The minimization of Eq.~(\ref{eq:obj_relaxed}) thus comes down to simultaneously minimizing $\KL$ between $h(\bX)$ and the data $Y$, and between $h(\bX)$ and the closest $f \in \mathcal{F}_\gamma(\ogamma(h))$ \citep{buylKLDivergenceGraphModel2021}.

\begin{definition}\label{def:js}
    The \textbf{$\JS$-projection} uses the binary Jensen-Shannon divergence.
    \begin{equation}
        \JS\infdivx{f(\bX)}{h(\bX)} \triangleq \frac{1}{2}\KL\infdivx{f(\bX)}{m(\bX)} + \frac{1}{2}\KL\infdivx{h(\bX)}{m(\bX)}
    \end{equation}
    with $m(\bX) = \frac{1}{2} f(\bX) + \frac{1}{2} h(\bX)$. 
\end{definition}
Just like $\KL$, the $\JS$-divergence is a Csiszar divergence. However, the $\JS$-divergence is symmetric with respect to its arguments $f$ and $h$, which is not the case for the $\KL$-divergence.


\begin{definition}\label{def:sed}
    The \textbf{$\SED$-projection} uses the squared Euclidean distance between the two points $(1 - f(\bX), f(\bX))$ and $(1 - h(\bX), h(\bX))$:
    \begin{equation}
        \SED\infdivx{f(\bX)}{h(\bX)} \triangleq 2(f(\bX) - h(\bX))^2.
    \end{equation}
\end{definition}
$\SED$ is a Bregman divergence between the Bernoulli distributions with parameters $f(\bX)$ and $h(\bX)$.

In practice, we evaluate projection \textsc{fairret}s $R_\gamma(h)$ in two steps. 
\begin{align}
    \textnormal{(i)}\quad &f^* = \argmin_{f \in \mathcal{F}_\gamma(\ogamma(h))} \E[D\infdivx{f(\bX)}{h(\bX)}]\\
    \textnormal{(ii)}\quad &R_\gamma(h) = \E[D\infdivx{f^*(\bX)}{h(\bX)}]
\end{align}

While keeping $h$ fixed, step (i) computes the overall statistic $\ogamma(h)$ and then finds the projection $f^*$ through constrained optimization. Subsequently, step (ii) keeps $f^*$ fixed and computes $\E[D\infdivx{f^*(\bX)}{h(\bX)}]$ as a function of $h$, which we use to compute the gradient with respect to $h$. This gradient differs from the actual gradient of the optimization as a function of $h$ in a projection \textsc{fairret} (Eq.~\ref{eq:proj_fairret}), because the latter would require us to treat $f^*$ as a function of $h$. However, by treating $f^*$ as fixed instead (without backpropagating through it), we significantly simplify the \textsc{fairret}'s implementation. The optimization in step (i) can then be solved generically using specialized libraries such as \texttt{cvxpy} \citep{agrawal2018rewriting,diamond2016cvxpy}. In our experiments, we find that only 10 optimization steps is enough to get a reasonable approximation of the solution. We discuss this approximation and visualize each projection $f^*$ in Appendix~\ref{app:projections_approx} and \ref{app:projections_viz} respectively.

Figure~\ref{fig:grads} shows that the gradients of the projection \textsc{fairret}s increase with higher values of $h$. We hypothesize this occurs when $\gamma(k; h) > \ogamma(h)$ because $\gamma(k; h)$ is more easily decreased by reducing higher $h$ values than lower ones. Conversely, when $\gamma(k; h) < \ogamma(h)$, there is more to gain from increasing lower $h$ values than higher ones. The sharp bend of the gradients of the $\SED$-projection is explained in Appendix~\ref{app:projections_viz} through an analysis of the projected distributions.


\subsection{Analysis}\label{sec:analysis}
\begin{proposition}\label{prop:strict}
    All \textsc{fairret}s presented in this paper (i.e. Def.~\ref{def:norm}, \ref{def:smoothmax}, \ref{def:kl}, \ref{def:js} and \ref{def:sed}) are strict.
\end{proposition}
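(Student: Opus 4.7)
The plan is to handle the two archetypes separately, since for violation \textsc{fairret}s the claim reduces to basic properties of norms and of $\log\!\sum\!\exp$, while for projection \textsc{fairret}s it reduces to positive-definiteness of the chosen divergence.

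For both Norm and SmoothMax, I would start from the paper's already-stated fact that $\bv(h) = \bm{0} \iff h \in \mathcal{F}_\gamma$ (which is immediate from Eq.~(\ref{eq:violation}) together with Def.~\ref{def:notion}). For the Norm \textsc{fairret}, strictness is then just the definition of a norm: $\lVert \bv(h) \rVert = 0 \iff \bv(h) = \bm{0}$. For SmoothMax, observe that each component $\bv_k(h) \geq 0$, so $\exp(\bv_k(h)) \geq 1$ with equality iff $\bv_k(h)=0$. Thus $\sum_k \exp(\bv_k(h)) \geq d_s$ with equality iff $\bv(h) = \bm 0$, and taking logs gives $R_\gamma(h) \geq 0$ with equality iff $h \in \mathcal{F}_\gamma$. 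This also justifies the $-\log d_s$ offset in Def.~\ref{def:smoothmax}.

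For the projection \textsc{fairret}s the key observation is that if $h \in \mathcal{F}_\gamma$, then $\gamma(k;h) = \ogamma(h)$ for all $k$, so $h \in \mathcal{F}_\gamma(\ogamma(h))$, i.e. $h$ itself is feasible in the projection problem (\ref{eq:proj_fairret}). Since $\KL$, $\JS$ and $\SED$ all satisfy $D(a\|a) = 0$, plugging in $f = h$ gives $R_\gamma(h) \leq \E[D(h(\bX)\|h(\bX))] = 0$, and combined with $D \geq 0$ this yields $R_\gamma(h) = 0$. Conversely, suppose $R_\gamma(h) = 0$. Since $D \geq 0$ pointwise and the expected value is $0$, any minimizer $f^*$ must satisfy $D(f^*(\bX)\|h(\bX)) = 0$ almost surely. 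Each of the three divergences is positive-definite as a function of the Bernoulli parameters (for $\KL$ and $\JS$ by Gibbs' inequality, for $\SED$ trivially), so $f^*(\bX) = h(\bX)$ almost surely. Because $f^* \in \mathcal{F}_\gamma(\ogamma(h))$, this gives $\gamma(k;h) = \ogamma(h)$ for all $k$, hence $h \in \mathcal{F}_\gamma$.

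The main (mild) obstacle is the converse direction for the projection \textsc{fairret}s, where one must argue that a minimizer $f^*$ is actually attained rather than only approached as an infimum; otherwise the ``$D = 0$ a.s.'' step needs to be replaced by a sequence argument showing that $\E[D(f_n(\bX)\|h(\bX))] \to 0$ forces $h$ to itself satisfy the linear constraints defining $\mathcal{F}_\gamma(\ogamma(h))$. Since $\mathcal{F}_\gamma(\ogamma(h))$ is always nonempty (it contains the constant classifier and, when $h$ is fair, $h$ itself), and since we only ever use the \textsc{fairret} with $f^*$ computed as in step (i), invoking attainment is harmless in practice; a one-sentence remark to that effect suffices.
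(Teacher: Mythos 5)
Your proposal is correct and follows essentially the same route as the paper's own proof: violation \textsc{fairret}s via the norm axioms and the fact that $\exp(\bv_k(h)) \geq 1$ with equality iff $\bv_k(h) = 0$, and projection \textsc{fairret}s via positive-definiteness of $\KL$, $\JS$ and $\SED$ together with the observation that $h \in \mathcal{F}_\gamma \iff h \in \mathcal{F}_\gamma(\ogamma(h)) \subset \mathcal{F}_\gamma$. Your remark about attainment of the minimizer is a point the paper's proof silently glosses over as well (though note your parenthetical that $\mathcal{F}_\gamma(\ogamma(h))$ always contains a constant classifier only holds for statistics like DP, not in general).
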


Hence, all proposed \textsc{fairret}s can indeed be properly regarded as quantifications of unfairness.

They are differentiable with respect to $h$. Violation \textsc{fairret}s owe this to the differentiability of $\gamma$ and projection \textsc{fairret}s to the differentiability of $D$. Hence, \textsc{fairret}s are easily implemented with an automatic differentiation library like PyTorch. The computational overhead is unaffected by the complexity of the parameters $\btheta$ of $h$, as the gradients $\nabla_{\btheta} \mathcal{L}_Y = \mleft(\frac{\partial h}{\partial \btheta}\mright)^\top \nabla_h \mathcal{L}_Y $ and $\nabla_{\btheta} R_\gamma = \mleft(\frac{\partial h}{\partial \btheta}\mright)^\top \nabla_h R_\gamma$ of both loss functions in the joint objective (Eq.~\ref{eq:obj_relaxed}) share the computation of the Jacobian $\frac{\partial h}{\partial \btheta}$. 

It is common to minimize $\mathcal{L}_Y$ using mini-batches; the same batches can be used to minimize $R_\gamma$. Indeed, this is done in our experiments. Though this makes \textsc{fairret}s scalable, insufficient batch sizes will lead to poor approximations of the statistics $\gamma$. Clearly, the mean violation $\bv(h)$ in a violation \textsc{fairret} (Eq.~\ref{eq:violation}) computed over mini-batches is not an unbiased estimate of the actual violation over all data. We report the mean SmoothMax loss for increasing batch sizes in Appendix~\ref{app:minibatch}.

\section{Experiments}\label{sec:exps}

\subsection{Setup}\label{sec:setup}
Experiments were conducted on the 
\textit{Bank} \citep{Moro_Cortez_Rita_2014}, \textit{CreditCard} \citep{Yeh_Lien_2009}, \textit{LawSchool}\footnote{Curated and published by the SEAPHE project}, and \textit{ACSIncome} \citep{ding2021retiring} datasets. Each has multiple sensitive features, including some continuous.
The classifier $h$ was a fully connected neural net with hidden layers of sizes [256, 128, 32] followed by a sigmoid and did not take sensitive features $\bS$ as input. We trained with all \textsc{fairret}s discussed in Sec.~\ref{sec:fairret} but only report results of Norm, $\JS$-projection and $\SED$-projection in Appendix~\ref{app:other_fairrets} to avoid clutter here. The remaining \textsc{fairret}s, SmoothMax and $\KL$-projection, were representative for their archetype. These are compared against three baselines implemented in the Fair Fairness Benchmark (FFB) by \citet{hanFFBFairFairness2023}, as their implementation provides these baselines as loss terms in idiomatic PyTorch. They are \textit{PRemover} \citep{kamishimaFairnessAwareClassifierPrejudice2012}, \textit{HSIC} \citep{perez-suayFairKernelLearning2017}, and \textit{AdvDebias} \citep{adelOneNetworkAdversarialFairness2019} (where the reverse of the adversary's loss is the fairness loss term). In contrast to the \textsc{fairret} implementations, they only accept a single, categorical sensitive attribute. Each \textsc{fairret} and FFB fairness loss was added to the cross-entropy loss in the objective (Eq.~\ref{eq:obj_relaxed}) in a separate training run for a range of strengths $\lambda > 0$. 

We measured fairness over the four statistics $\gamma$ in Table~\ref{tab:fairalphabeta} that relate to Demographic Parity (DP), Equal Opportunity (EO), Predictive Parity (PP), and Treatment Equality (TE) respectively. Violation of each fairness notion is computed as $\max_k \bv_k(h)$ (see Eq.~(\ref{eq:violation})). Each \textsc{fairret} was minimized with respect to each $\gamma$ in a separate training run (and only the optimized violation is reported). The three FFB baselines only consider one fairness notion, which is to maximize independence between the model's output and the sensitive attributes. Their violation is reported for each statistic $\gamma$.

In summary, there was an experiment run for each dataset, fairness method, fairness strength $\lambda$, and statistic $\gamma$ (except for the FFB baselines). Finally, we also use the \textit{Unfair} baseline with $\lambda = 0$. Each of these combinations was repeated across 10 random seeds with each different train/test splits.

\subsection{Results}\label{sec:results}
\begin{figure}[tb]
	\centering

\begin{subfigure}{\textwidth}
    \centering
	\includegraphics[width=\textwidth]{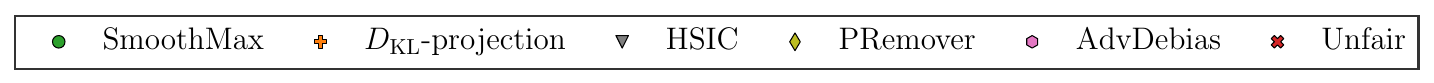}
\end{subfigure}

\begin{subfigure}{0.245\textwidth}
	\includegraphics[width=\textwidth]{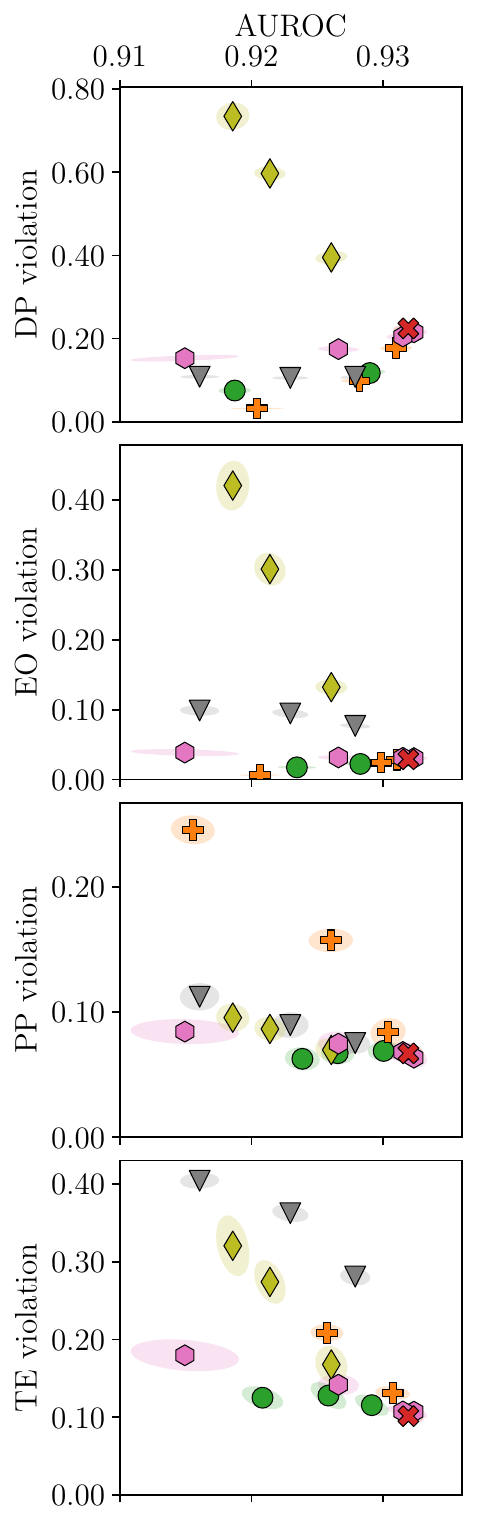}
	\caption{\textit{Bank}}
\end{subfigure}
\hfill
\begin{subfigure}{0.245\textwidth}
	\includegraphics[width=\textwidth]{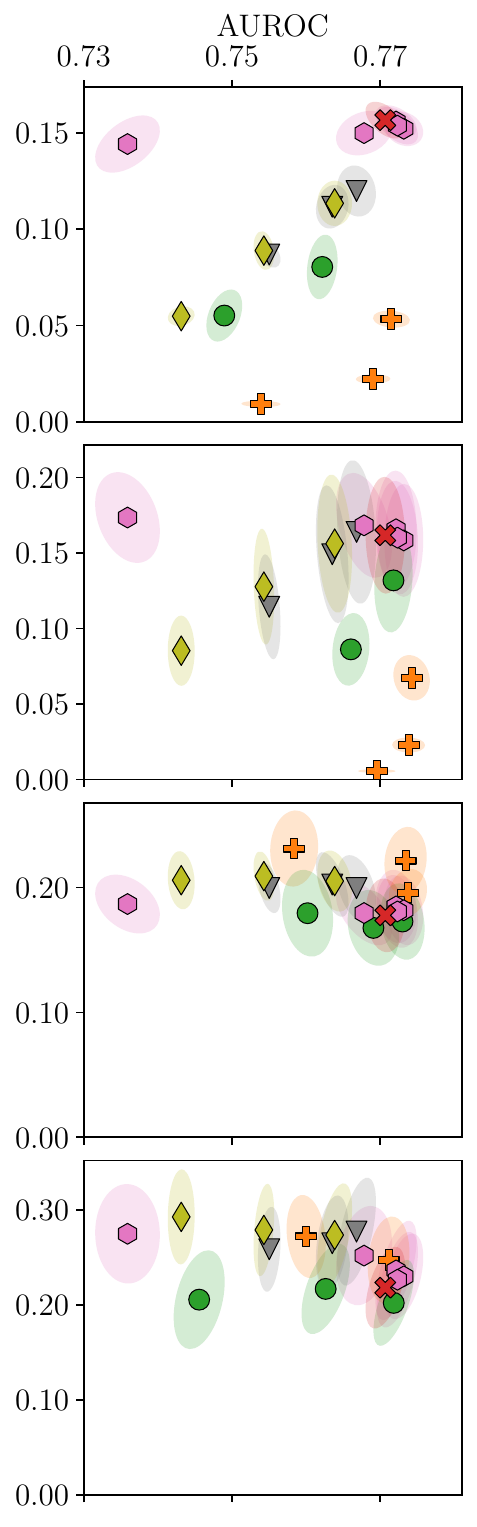}
	\caption{\textit{CreditCard}}
\end{subfigure}
\hfill
\begin{subfigure}{0.245\textwidth}
	\includegraphics[width=\textwidth]{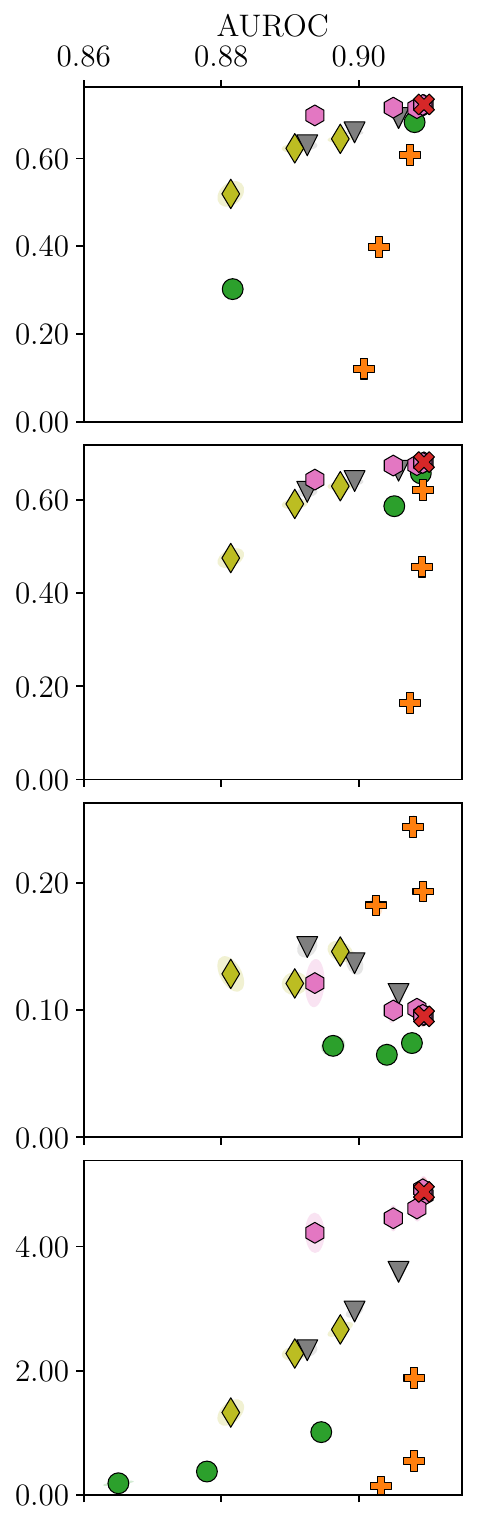}
	\caption{\textit{LawSchool}}
\end{subfigure}
\hfill
\begin{subfigure}{0.245\textwidth}
	\includegraphics[width=\textwidth]{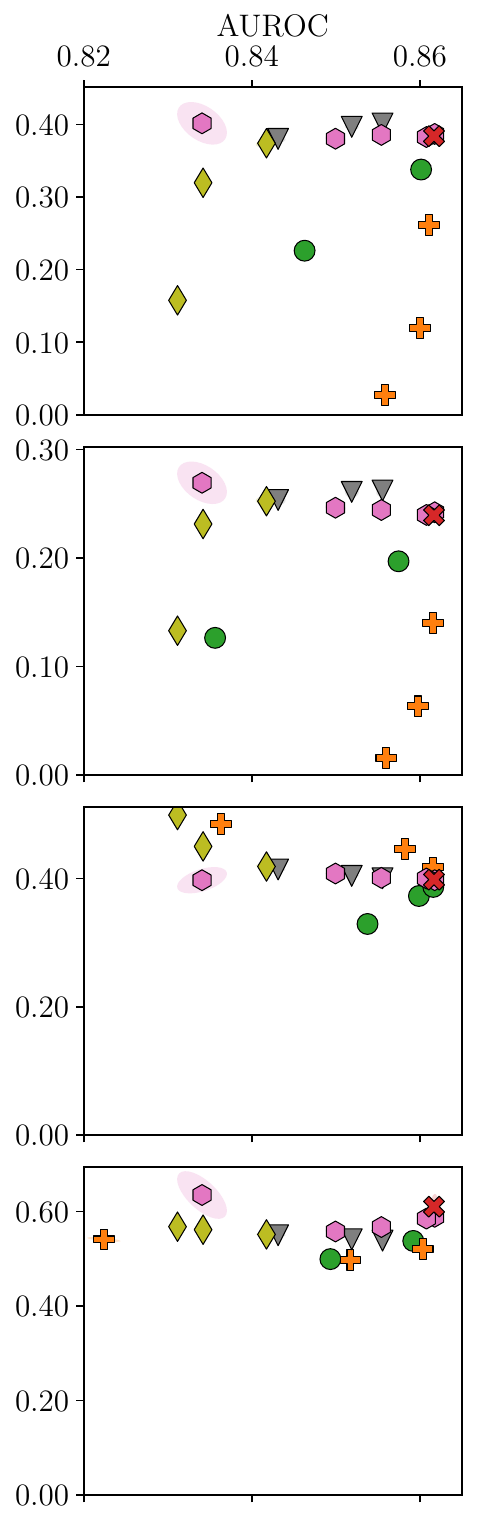}
	\caption{\textit{ACSIncome}}
\end{subfigure}
\caption{Mean test set results with confidence ellipse for the standard error. Each marker is a separate combination of dataset, \textsc{fairret}, \textsc{fairret} strength, and statistic. Results in the lower right are optimal. Failed runs (with an AUROC far worse than the rest) are omitted.}
\label{fig:results}
\end{figure}

Test set results are visualized in Fig.~\ref{fig:results}; train set results are found in Appendix~\ref{app:train_results} (and display the same trends). We separately discuss the notions with linear and with linear-fractional statistics.

\textbf{For DP and EO, which have \textit{linear} statistics}, both the SmoothMax and $\KL$-projection \textsc{fairret}s are effectively used to minimize the fairness violation with respect to multiple sensitive attributes while minimally suffering a loss in AUROC scores, though the projection \textsc{fairret} clearly performs better than the violation-based SmoothMax \textsc{fairret}. As expected, the FFB baselines perform worse than the methods implemented in our \textsc{fairret} framework, since they cannot be configured to optimize the same general range of fairness definitions. Also, their implementation only minimizes bias with respect to a single sensitive attribute, and so they are oblivious to some of the components in $\bS$ that the violation in Fig.~\ref{fig:results} measures. We report their violations on this single attribute in Appendix~\ref{app:single_attr}, though the \textsc{fairret}s still outperform them there as well.

\textbf{For PP and TE, which have \textit{linear-fractional statistics}}, all methods appear to struggle far more. SmoothMax is most consistent and never makes the fairness violation worse, yet the $\KL$-projection in most cases makes both the fairness violation and the AUROC worse. The same occurs for the FFB baselines. To some extent, this can be attributed to overfitting, as SmoothMax leads to a significantly more consistent reduction of the train set fairness violation than the test set (see Appendix~\ref{app:train_results}). Still, non-linear fairness notions are clearly harder to optimize, which aligns with the results of \citet{celisClassificationFairnessConstraints2019}. Though \citet{barocas-hardt-narayanan} conclude that sufficiency (a notion related to PP) `often comes for free', further work is needed to better understand how such notions can be consistently achieved.


\section{Conclusion}
The \textsc{fairret} framework allows for a wide range of fairness definitions by comparing linear-fractional statistics for each sensitive feature. We implement several \textsc{fairret}s and show how they are easily integrated in existing machine learning pipelines utilizing automatic differentiation. 

Empirically, violation \textsc{fairret}s like SmoothMax consistently lead to trade-offs between fairness and AUROC, though the more involved projection \textsc{fairret}s like the $\KL$-projection clearly outperform them on fairness definitions with linear statistics. However, all methods struggle with fairness notions that have linear-fractional statistics like PP and TE, which have mostly been ignored in prior work. This signals a lucrative direction for future research.

\clearpage
\section*{Ethics Statement}
The \textsc{fairret} framework was made as a technical tool to help unveil and address a mathematical formalization of fairness in machine learning systems. However, such tools should never be considered a sufficient solution to truly achieve fairness in real-world decision processes because the social, human component of fairness is completely outside the control of this framework \citep{selbstFairnessAbstractionSociotechnical2019}. There is a significant risk that technologies such as ours may anyway be abused to suggest discriminatory bias has been `removed' from a decision process without actually addressing underlying injustices \citep{hoffmannWhereFairnessFails2019}.

\section*{Reproducibility}
All proofs, i.e. for Table~\ref{tab:fairalphabeta}, Prop.~\ref{prop:linear_fixed} and Prop.~\ref{prop:strict}, are found in the Appendix \ref{app:proofs}. Appendix~\ref{app:remarks} contains additional context for Remarks~\ref{rem:cont}, \ref{rem:intersect} and \ref{rem:probabilistic}. Appendix~\ref{app:results} provides experiments referred to in Sec.~\ref{sec:projection}: a visualization of the projections of the projection \textsc{fairret}s and an empirical assessment of their approximation for fewer optimization iterations. It also evaluates the mean SmoothMax loss for smaller batch sizes mentioned in Sec.~\ref{sec:analysis}. Furthermore, Appendix~\ref{app:results} extends the main experiment results of Sec.~\ref{sec:results} by providing the metrics of the other \textsc{fairret}s, the train set results and fairness violations computed for only a single sensitive attribute. Finally, Appendix~\ref{app:exps} further explains the experiment setup already summarized in Sec.~\ref{sec:setup}, i.e. the datasets, hyperparameters, the baselines implementation, the computation of the confidence ellipses and runtimes. 

The code for our full experiment pipeline is found in the rest of the supplementary material. 

The streamlined package is available at \url{https://github.com/aida-ugent/fairret}. Examples of the code use (at the time of writing the paper) are provided in Appendix~\ref{app:code}.

\section*{Acknowledgements}
The research leading to these results has received funding from the Special Research Fund (BOF) of Ghent University (BOF20/DOC/144 and BOF20/IBF/117), from the Flemish Government under the ``Onderzoeksprogramma Artificiële Intelligentie (AI) Vlaanderen'' programme, and from the FWO (project no. G0F9816N, 3G042220, G073924N).


\bibliographystyle{iclr2024_conference}
\bibliography{2023-03_fairness_regularization_framework.bib,bib_mbd.bib}

\clearpage
\appendix
\counterwithin{figure}{section}
\counterwithin{proposition}{section}
\counterwithin{equation}{section}
\counterwithin{lstlisting}{section}
\renewcommand{\theequation}{\thesection.\arabic{equation}}
\renewcommand{\theproposition}{\thesection.\arabic{proposition}}
\renewcommand{\thefigure}{\thesection.\arabic{figure}}
\renewcommand{\thelstlisting}{\thesection.\arabic{lstlisting}}

\section{Proofs}\label{app:proofs}
\subsection{Table \ref{tab:fairalphabeta}}\label{app:alphabeta}
Here, we show how the $\alpha_0$, $\alpha_1$, $\beta_0,$ and $\beta_1$ functions are derived for each of the fairness notions in Table~\ref{tab:fairalphabeta}. These fairness notions are typically defined in the partition fairness setting, and so we will make the same assumptions here (see Def.~\ref{def:part_fair}). Before discussing each fairness notion separately, we make some general observations.

First, there may be a concern that our fairness constraint, i.e. $\forall k \in [d_s]: \gamma(k; f) = \ogamma(f)$, requires each group's statistic $\gamma(k; f)$ to equal the overall statistic $\ogamma(f) \triangleq \gamma(1; f)$, whereas popular surveys of fairness definitions such as by \citet{vermaFairnessDefinitionsExplained2018} instead only require that each group's statistic is the same, i.e. 
\begin{equation}
    \forall k, l \in [d_s]: \gamma(k; f) = \gamma(l; f)
\end{equation}
However, these constraints are equivalent.

\begin{proposition}
    In partition fairness, with $\gamma \in \Gamma$:
\begin{equation}
    \forall k \in [d_s]: \gamma(k; f) = \ogamma(f) \iff \forall k, l \in [d_s]: \gamma(k; f) = \gamma(l; f)
\end{equation}
\end{proposition}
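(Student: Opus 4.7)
The plan is to exploit the defining property of partition fairness, namely $\sum_{k \in [d_s]} S_k = 1$, to write $\ogamma(f)$ as a kind of weighted combination of the per-group statistics $\gamma(k;f)$. Concretely, I would introduce the shorthand
\begin{equation*}
N_k \triangleq \E[S_k(\alpha_0(\bX,Y) + f(\bX)\beta_0(\bX,Y))], \quad D_k \triangleq \E[S_k(\alpha_1(\bX,Y) + f(\bX)\beta_1(\bX,Y))],
\end{equation*}
so that $\gamma(k;f) = N_k / D_k$ by Def.~\ref{def:lf}. Using $\sum_k S_k = 1$ inside the expectations in the numerator and denominator of $\ogamma(f)$, and linearity of expectation, I would obtain
\begin{equation*}
\ogamma(f) = \frac{\sum_{k \in [d_s]} N_k}{\sum_{k \in [d_s]} D_k}.
\end{equation*}

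The ($\Rightarrow$) direction is immediate: if every $\gamma(k;f)$ equals the common value $\ogamma(f)$, they equal each other. For the ($\Leftarrow$) direction, suppose $\gamma(k;f) = \gamma(l;f)$ for all $k,l$, and call the common value $c$. Then $N_k = c\, D_k$ for every $k$, so summing gives $\sum_k N_k = c \sum_k D_k$, hence $\ogamma(f) = c$, which coincides with every $\gamma(k;f)$. This step is a standard ``mediant inequality''--style observation: a ratio that is constant across $k$ equals the ratio of the sums.

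The only subtlety worth flagging is well-definedness: the argument tacitly assumes $D_k \neq 0$ for each $k$ (so that $\gamma(k;f)$ is defined) and $\sum_k D_k \neq 0$ (so that $\ogamma(f)$ is defined), which I would state as a standing nondegeneracy assumption. I do not anticipate a main obstacle here: the proof is essentially a one-line manipulation once the $N_k, D_k$ notation is set up and the partition property is invoked to split the unconditional expectations, so the write-up should be short.
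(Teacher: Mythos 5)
Your proof is correct and follows essentially the same route as the paper's: both reduce the reverse direction to showing that the overall statistic is the ratio of the summed group numerators and denominators, so that a common value $c$ of the per-group ratios forces $\ogamma(f)=c$. The only cosmetic difference is that you insert $\sum_k S_k = 1$ directly inside the expectations, whereas the paper reaches the identical decomposition via conditional expectations and the law of total expectation; your explicit flagging of the nondegeneracy assumption ($D_k \neq 0$ and $\sum_k D_k \neq 0$) is a reasonable addition the paper leaves implicit.
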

\begin{proof}
    The forward ($\implies$) relation is trivial: if all group statistics $\gamma(k; f) = \ogamma(f)$, then necessarily they are also all equal.

    The reverse ($\impliedby$) is less straightforward. Clearly, 
    \begin{equation}\label{eq:c_exists}
        \forall k, l \in [d_s]: \gamma(k; f) = \gamma(l; f) \iff \exists c \in \mathbb{R}: \forall k \in [d_s]: \gamma(k; f) = c
    \end{equation}
    It thus suffices to show $c \equiv \ogamma(f)$ in such cases. Let $g_0(\bX, Y) = \alpha_0(\bX, Y) + f(\bX) \beta_0(\bX, Y)$ and $g_1(\bX, Y) = \alpha_1(\bX, Y) + f(\bX) \beta_1(\bX, Y)$. Then
    \begin{align}
    \begin{split}
        \gamma(k; f) &= \frac{\E[S_k g_0(\bX, Y)]}{\E[S_k g_1(\bX, Y)]}\\
        &= \frac{\sum_{X, Y, S} S_k g_0(\bX, Y) P(X, Y, S)}{\sum_{X, Y, S} S_k g_1(\bX, Y) P(X, Y, S)}\\
        &= \frac{\sum_{X, Y} g_0(\bX, Y) P(X, Y, S_k = 1)}{\sum_{X, Y} g_1(\bX, Y) P(X, Y, S_k = 1)}\\
        &= \frac{\sum_{X, Y} g_0(\bX, Y) P(X, Y \mid S_k = 1)}{\sum_{X, Y} g_1(\bX, Y) P(X, Y \mid S_k = 1)}\\
        &= \frac{\E[g_0(\bX, Y) \mid S_k = 1]}{\E[g_1(\bX, Y) \mid S_k = 1]}
    \end{split}
    \end{align}
    where the third line used the partition fairness assumptions.

    Thus, Eq.~(\ref{eq:c_exists}) also entails that $\E[g_0(\bX, Y) \mid S_k = 1] = c \E[g_1(\bX, Y) \mid S_k = 1]$.

    By the law of total expectation, we now have
    \begin{align}
    \begin{split}
        \ogamma(f) &= \frac{\E[g_0(\bX, Y)]}{\E[g_1(\bX, Y)]}\\
        &= \frac{\E[\E[g_0(\bX, Y) \mid S]]}{\E[\E[g_1(\bX, Y) \mid S]]}\\
        &= \frac{\sum_k \E[g_0(\bX, Y) \mid S_k = 1]P(S_k = 1)}{\sum_k \E[g_1(\bX, Y) \mid S_k = 1]P(S_k = 1)}\\
        &= \frac{\sum_k c \E[g_1(\bX, Y) \mid S_k = 1]P(S_k = 1)}{\sum_k \E[g_1(\bX, Y) \mid S_k = 1]P(S_k = 1)}\\
        &= c   
    \end{split}
    \end{align}
    where the third line again used partition fairness. We thus indeed always have $c = \ogamma(f)$. Applying Eq.~(\ref{eq:c_exists}) then completes the proof.
\end{proof}

A second observation used to derive Table~\ref{tab:fairalphabeta} is that fairness notions tend to be defined over probabilities involving binary variables. To formulate fairness notions in terms of expectations, we can thus extensively use these binary variables as indicator functions. For example, the frequency of true positives $P(f(\bX) = 1, Y = 1)$ is easily counted as $\E[f(\bX)Y]$.

These observations can be applied to all fairness definitions presented by \citet{vermaFairnessDefinitionsExplained2018} in their Sections 3.1 and 3.2 and we refer to their work for a detailed interpretation of each definition. 

We now show how Table~\ref{tab:fairalphabeta} can be constructed by presenting the statistic of each fairness definition and (a simplified form of) their $\gamma$ function under our notation.


\begin{itemize}
    \item \textbf{demographic parity} (a.k.a statistical parity) \citep{dworkFairnessAwareness2012} was already discussed in Example~1. It requires equal positive rates $P(f(\bX) = 1 \mid S_k = 1)$, i.e. $\gamma(k; f) = \frac{\E[S_k f(\bX)]}{\E[S_k]}$.
    \item \textbf{conditional demographic parity} (a.k.a conditional statistical parity) \citep{corbett-daviesAlgorithmicDecisionMaking2017,Wachter_Mittelstadt_Russell_2020} conditions demographic parity on a part of the input. This is typically an input feature with respect to which we are allowed to discriminate. Let $\zeta: \mathbb{R}^{d_x} \to \{0, 1\}$ be an arbitrary function with a binary output. Then the fairness definition requires equal $P(f(\bX) = 1 \mid \zeta(\bX), S_k = 1)$, i.e. $\gamma(k; f) = \frac{\E[S_k f(\bX) \zeta(\bX)]}{\E[S_k \zeta(\bX)]}$. Note that this statistic is easily extended to also allow non-binary $\zeta$ functions.
    \item \textbf{equal opportunity} \citep{hardtEqualityOpportunitySupervised2016} was already discussed in Example~2. It considers positive rates over only the positive samples ($Y = 1$). It thus compares false negative rates, or equivalently true positive rates or recall $P(f(\bX) = 1 \mid Y = 1, S_k = 1)$, i.e. $\gamma(k; f) = \frac{\E[S_k f(\bX)Y]}{\E[S_k Y]}$.
    \item \textbf{false positive parity} (a.k.a. predictive equality) \citep{hardtEqualityOpportunitySupervised2016,corbett-daviesAlgorithmicDecisionMaking2017} is similar to equal opportunity, but for the negative class. Hence, it compares true negative rates, or equivalently false positive rates $P(f(\bX) = 1 \mid Y = 0, S_k = 1)$, i.e. $\gamma(k; f) = \frac{\E[S_k f(\bX)(1 - Y)]}{\E[S_k (1 - Y)]}$. If combined with equal opportunity, it enforces equalized odds \citep{hardtEqualityOpportunitySupervised2016}.
    \item \textbf{predictive parity} \citep{chouldechovaFairPredictionDisparate2017} was already discussed in Example~3. It compares the positive predictive value, a.k.a the precision $P(Y = 1 \mid f(\bX) = 1, S_k = 1)$, i.e. $\gamma(k; f) = \frac{\E[S_k f(\bX) Y]}{\E[S_k f(\bX)]}$.
    \item \textbf{false omission parity} is similar to predictive parity, but for the negative class. It is not explicitly discussed in \citep{vermaFairnessDefinitionsExplained2018}, yet it clearly compares false omission rates $P(Y = 1 \mid f(\bX) = 0, S_k = 1)$, i.e. $\gamma(k; f) = \frac{\E[S_k (1 - f(\bX)) Y]}{\E[S_k (1 - f(\bX))]}$.
    \item \textbf{accuracy equality} \citep{Berk2021} compares accuracies $P(Y = f(\bX) \mid S_k = 1)$ across groups. Hence, it computes the relative amount of true positives and true negatives of each group, i.e. $\gamma(k; f) = \frac{\E[S_k (f(\bX) Y + (1 - f(\bX))(1 - Y))]}{\E[S_k]}$.
    \item \textbf{treatment equality} \citep{Berk2021} was already discussed in Example~4. It requires the fraction of false negatives over false positives to be equal (or vice versa), and therefore does not represent a probability. Its statistic is thus $\gamma(k; f) = \frac{\E[S_k (1 - f(\bX)) Y]}{\E[S_k f(\bX)(1 - Y)]}$. 
\end{itemize}

\subsection{Proposition~\ref{prop:linear_fixed}}
\begin{proof}
Considering the definition of linear-fractional statistics $\gamma \in \Gamma_\text{LF}$ in Def.~\ref{def:lf}, we define the shorthand notations $\alpha(\bX, Y, c) = \alpha_0(\bX, Y) - c \alpha_1(\bX, Y)$ and $\beta(\bX, Y, c) = \beta_0(\bX, Y) - c \beta_1(\bX, Y)$. It is then straightforward to see that 
    
\begin{align}
    \begin{split}
        &\phantom{\iff} \gamma(k; f) = c\\
        &\iff \frac{\E[S_k (\alpha_0(\bX, Y) + f(\bX) \beta_0(\bX, Y))]}{\E[S_k (\alpha_1(\bX, Y) + f(\bX) \beta_1(\bX, Y))]} = c\\
        &\iff \E[S_k \mleft(\alpha(\bX, Y, c) + f(\bX)\beta(\bX, Y, c)\mright)] = 0 
    \end{split}
\end{align}
where the last step uses the linearity of the expectation operator $\E$. The resulting constraint is linear with respect to $f$ since $\alpha$ and $\beta$ are both functions of functions that do not depend on $f$. 
\end{proof}

\subsection{Proposition~\ref{prop:strict}}
To show the strictness of the proposed \textsc{fairret}s, we use a separate strategy for violation and projection \textsc{fairret}s.

\subsubsection{Violation \textsc{fairret}s}
\begin{proof}
Per Def.~\ref{def:notion}, it is easily seen that $\bv(h) = \bm{0} \iff h \in \mathcal{F}_\gamma$. This also holds in the case where $\ogamma(h) = 0$, as we can define $\bv_k(h) = \abs{\gamma(k; h)}$ there instead. Hence, we only need to show that $R_\gamma(h) = 0 \iff \bv(h) = \bm{0}$ to show the strictness of violation \textsc{fairret}s $R_\gamma$.

For the Norm \textsc{fairret}, the strictness is obvious: a norm defined over a vector space is always assumed to equal zero only for the vector of zeros, i.e.
\begin{equation}
    R_\gamma(h) \triangleq \norm{\bv(h)} = 0 \iff \bv(h) = 0. 
\end{equation}

For the SmoothMax \textsc{fairret}, the strictness follows from the $\log d_s$ adjustment term:
\begin{align}
    \begin{split}
        R_\gamma(h) &\triangleq \log{\sum_{k \in [d_s]} \exp(\bv_k(h))} - \log{d_s}= 0\\
        &\iff \sum_{k \in [d_s]} \exp(\bv_k(h)) = d_s\\
        &\iff \bv(h) = 0
    \end{split}
\end{align}
where the last step follows from the non-negativity of $v_k$.
\end{proof}

\subsubsection{Projection \textsc{fairret}s}
\begin{proof}
A projection \textsc{fairret} is easily shown to be strict if its divergence measure $D$ satisfies
\begin{equation}\label{eq:strict_div}
    D\infdivx{f(\bX)}{h(\bX)} > 0 \wedge D\infdivx{f(\bX)}{h(\bX)} = 0 \iff f(\bX) = h(\bX)
\end{equation} 

Indeed, if $h \in \mathcal{F}_\gamma$, then also $h \in \mathcal{F}_\gamma(\ogamma(h))$ by construction. We can then choose $f = h$ and use Eq.~(\ref{eq:strict_div}) to get $D\infdivx{f(\bX)}{h(\bX)} = 0$. This shows $h \in \mathcal{F}_\gamma \implies R_\gamma(h) = 0$.

Conversely, the assumed properties of $D$ in Eq.~(\ref{eq:strict_div}) entail 
\begin{align}
    \begin{split}
        &R_\gamma(h) \triangleq \min_{f \in \mathcal{F}_\gamma(\ogamma(h))} \E[D\infdivx{f(\bX)}{h(\bX)}] = 0\\ 
        &\iff \exists f \in \mathcal{F}_\gamma(\ogamma(h)): \forall \bX \in \mathbb{R}^{d_s}: f(\bX) = h(\bX)\\
        &\iff h \in \mathcal{F}_\gamma(\ogamma(h))
    \end{split}
\end{align}
where the second line assumes\footnote{If the support of $\bX$ does not equal $\mathbb{R}^{d_s}$, then we can simply reformulate our framework to only consider function outputs $f(\bX)$ and $h(\bX)$ for points $\bX$ that \textit{do} lie in its support.} the support of $\bX$ equals $\mathbb{R}^{d_s}$. Since $\mathcal{F}_\gamma(\ogamma(h)) \subset \mathcal{F}_\gamma$, we have $R_\gamma(h) = 0 \implies h \in \mathcal{F}_\gamma$.

To show the strictness of the projection \textsc{fairret}s, it thus suffices to show that the divergences $D$ in Def.~\ref{def:kl}, \ref{def:js} and \ref{def:sed} all satisfy the requirements in Eq.~(\ref{eq:strict_div}). 

It is a well-known property of the Kullback-Leibler divergence $\KL$ that Eq.~(\ref{eq:strict_div}) holds \citep{csiszarDivergenceGeometryProbability1975a}. 

The properties of the $\KL$-divergence also trivially imply non-negativity for the Jensen-Shannon divergence. Moreover, it entails
\begin{align}
    \begin{split}
        &\JS\infdivx{f(\bX)}{h(\bX)} \triangleq \frac{\KL\infdivx{f(\bX)}{m(\bX)} + \KL\infdivx{h(\bX)}{m(\bX)}}{2} = 0\\
        &\iff \KL\infdivx{f(\bX)}{m(\bX)} = \KL\infdivx{h(\bX)}{m(\bX)} = 0\\
        &\iff f(\bX) = m(\bX) = h(\bX)
    \end{split}
\end{align}
which means Eq.~(\ref{eq:strict_div}) indeed also holds for $\JS$.

Finally, it is clear that $\SED$ satisfies Eq.~(\ref{eq:strict_div}), as the Euclidean distance is non-negative and only zero for overlapping points.
\end{proof}

\section{Additional Discussion}\label{app:remarks}
\subsection{Addendum to Remark~\ref{rem:cont}}\label{app:cont}
Observe that, for $\gamma \in \Gamma_\text{LF}$:
\begin{equation}
    \gamma(k; f) \triangleq \frac{\E[S_k (\alpha_0(\bX, Y) + f(\bX) \beta_0(\bX, Y))]}{\E[S_k (\alpha_1(\bX, Y) + f(\bX) \beta_1(\bX, Y))]}
\end{equation}
and
\begin{equation}
    \ogamma(f) \triangleq \frac{\E[\alpha_0(\bX, Y) + f(\bX) \beta_0(\bX, Y)]}{\E[\alpha_1(\bX, Y) + f(\bX) \beta_1(\bX, Y)]}
\end{equation}

In Remark~\ref{rem:cont}, we (non-exhaustively) mention two cases for $S_k \in \mathbb{R}$ where $\gamma(k; f) = \ogamma(f)$ holds:
\begin{enumerate}
    \item $S_k$ is deterministic, i.e. $S_k \equiv s$ for a constant $s \in \mathbb{R}$. Due to the linearity of the expectation operator, we trivially have $\gamma(k; f) = \ogamma(f)$. Though this case is degenerate, we should indeed expect a fairness criterion to hold if the random variable $S_k$ expresses no information about an individual (and as such cannot be grounds for discrimination).
    \item $S_k$ has no linear influence on the numerator or denominator of $\gamma$, i.e. 
    \begin{equation}\label{eq:zero_cov}
        \textnormal{cov}(S_k, \alpha_0(\bX, Y) + f(\bX) \beta_0(\bX, Y)) = \textnormal{cov}(S_k, \alpha_1(\bX, Y) + f(\bX) \beta_1(\bX, Y)) = 0.
    \end{equation}
    Indeed, using the definition of the covariance operator, we have
    \begin{equation}
        \gamma(k; f) = \frac{\textnormal{cov}(S_k, \alpha_0(\bX, Y) + f(\bX) \beta_0(\bX, Y)) + \E[\alpha_0(\bX, Y) + f(\bX) \beta_0(\bX, Y)]}{\textnormal{cov}(S_k, \alpha_1(\bX, Y) + f(\bX) \beta_1(\bX, Y)) + \E[\alpha_1(\bX, Y) + f(\bX) \beta_1(\bX, Y)]}.
    \end{equation}
    Unfortunately, Eq.~(\ref{eq:zero_cov}) is only a sufficient condition for $\gamma(k; f) = \ogamma(f)$ in general. Still, it becomes a necessary condition for the DP fairness notion (which always has $\textnormal{cov}(S_k, \alpha_1(\bX, Y) + f(\bX) \beta_1(\bX, Y)) = \textnormal{cov}(S_k, 1) = 0$).
\end{enumerate}

Yet, though we argue $\gamma(k; f) = \ogamma(f)$ is sensible to enforce for continuous variables $S_k \in \mathbb{R}$, it must be stressed that linear-fractional statistics check \textit{linear} effects of $S$ on $f(\bX)$. Higher-order moments will thus not be measured.

For example, if $S_0$ denotes `man-ness' and $S_1$ denotes `woman-ness', then individuals who identify with a non-binary gender may quantify their sensitive features as $S_0 = 50\%$ and $S_1 = 50\%$. However, linear covariance statistics (like ours) will not consider specific discrimination directed at those with, e.g., $S_1 = 50\%$. Instead, this will be taken into account as `half as influential' compared to individuals who identify as $S_1 = 100\%$.



\subsection{Addendum to Remark~\ref{rem:intersect}}\label{app:intersect}
In Remark~\ref{rem:intersect}, we mention that intersections of demographic groups will not be considered in fairness constraints if partition fairness does not hold.

For example, assume we want to check DP fairness with statistic $\gamma(k; f) = \frac{\E[S_k f(\bX)]}{\E[S_k]}$ for four uniformly selected samples with scores $(0.7, 0.3, 0.7, 0.3)$ given by $f$, values $(1, 1, 0, 0)$ for sensitive variable $S_0$ and values $(1, 0, 0, 1)$ for sensitive variable $S_1$. Then $\gamma(0; f) = \gamma(1; f) = 0.5$. However, $\frac{\E[S_0 S_1 f(\bX)]}{\E[S_k]} = 0.7$ and $\frac{\E[(1 - S_0) (1 - S_1) f(\bX)]}{\E[S_k]} = 0.3$. Here, the individual with sensitive feature vector $\bS = (0, 0)$, i.e. at the intersection of $S_0 = 0$ and $S_1 = 0$, thus receives worse scores than the others.

\subsection{Addendum to Remark~\ref{rem:probabilistic}}\label{app:probabilistic}
In our discussion of fairness definitions, we assume that we are enforcing constraints on classifiers $f: \mathbb{R}^{d_x} \to \{0, 1\}$, i.e. classifiers with a hard decision boundary. In practice, however, it is common to base decisions off of a parameterized regressor $r: \mathbb{R}^{d_x} \to \mathbb{R}$, e.g. with $r$ a neural network. For example, we can then deterministically collect decisions from $r$ as $f(\bX) = \ind_{r(\bX) > 0}$ with $\ind$ the indicator function. However, such a thresholding function has a gradient of zero with respect to $r$ and a discontinuity in $r(\bX) = 0$.

Several works have investigated directly using $r(\bX)$ in fairness constraints \citep{zafarFairnessConstraintsFlexible2019}. For example in DP fairness, directly computing $\gamma(k; h)$ will already enforce that the mean scores $r(X)$ should be equal across groups. Though this leads to interesting convex optimization problems for a fair $r$, it has been noted that this only observes a relaxed form of actual fairness constraints \citep{lohausTooRelaxedBe2020}.

A middle ground is to construct a probabilistic classifier $f$ as a Bernoulli distribution with parameter $h(\bX) = \sigma(r(\bX))$ where $\sigma$ denotes the logistic function $\sigma(s) = (1 + \exp(-s))^{-1}$. Hence, $f(\bX)$ is fully specified through $P(f(\bX) = 1 \mid \bX) = h(\bX)$. In the definition of $\gamma$, we can then simply replace the classification function $f$ in the expectation by the scoring model $h$, because $\E[\E[f(\bX) \mid \bX]] = \E[h(\bX)]$. Still, the fact that a probabilistic classifier is now not only dependent on $\bX$ but also on the randomness involved in sampling $f(\bX)$ introduces some noise into the decision process that we may wish to avoid. There is also the danger that in practice, real-world decisions would still be made according to a hard threshold on the scores $h(\bX)$, e.g. on $f(\bX) = \ind_{h(\bX) > 0.5}$. 

If this becomes an issue, then we could make use of recent work \citep{padhAddressingFairnessClassification2021,bendekgeyScalableStableSurrogates} that has investigated surrogate functions that are better approximations of the thresholding function $\ind_{\cdot > 0}$ than the logistic function $\sigma$. For example, a suitable surrogate is the scaled logistic function $\sigma_a(s) \triangleq \sigma(as)$, since $\lim_{a \to \infty} \sigma_a(s) = \ind_{s > 0}$ for $s \neq 0$. However, note also that $\lim_{a \to \infty} \frac{\partial}{\partial s} \sigma_a(s) = 0$. Surrogate functions like $\sigma_a$ thus allow us to trade-off the hardness of the classification with the quality of its gradient.

\section{Additional Results}\label{app:results}
\subsection{Approximate Projection}\label{app:projections_approx}
In Sec.~\ref{sec:projection}, we suggest that the convex optimization of the projections $f^*$ can already converge quite well after only 10 iterations. By placing such a limit, we can significantly reduce the computational cost of $R_\gamma(h)$, though $R_\gamma(h)$ will be an overestimate (as projections $f^*$ with a smaller divergence to $h$ may not have been found yet). 

In Fig.~\ref{fig:iters}, we observe that training with at most 10 iterations per projection is indeed enough to minimize the DP violation while also being much faster to compute. Unexpectedly, we even observe the DP violation to be slightly lower after training with at most 10 iterations than with more than 10, which could indicate that fewer iterations can have a positive effect on the training process.

\begin{figure}[tb]
\begin{subfigure}{0.47\textwidth}
    \centering
    \includegraphics[width=\textwidth]{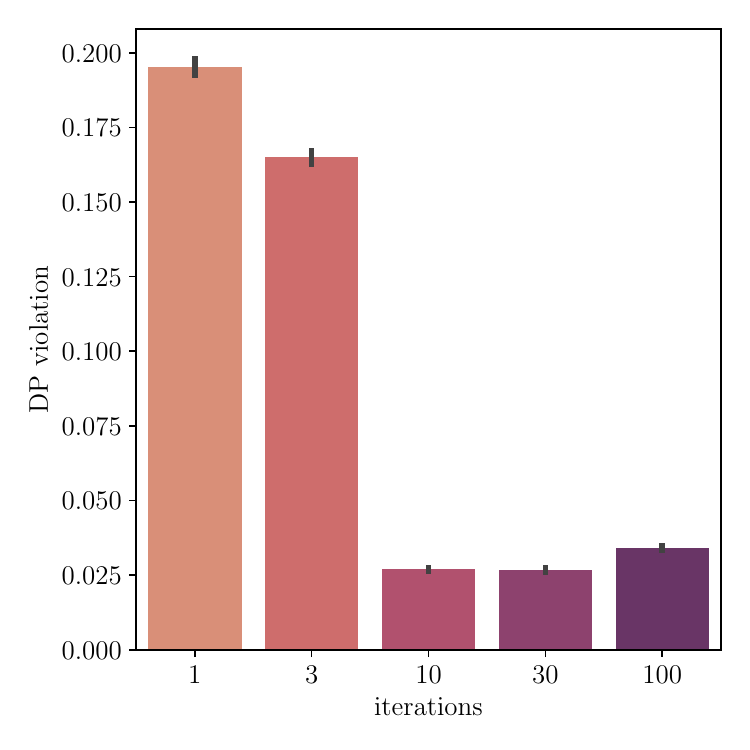}
\end{subfigure}
\begin{subfigure}{0.47\textwidth}
    \centering
    \includegraphics[width=\linewidth]{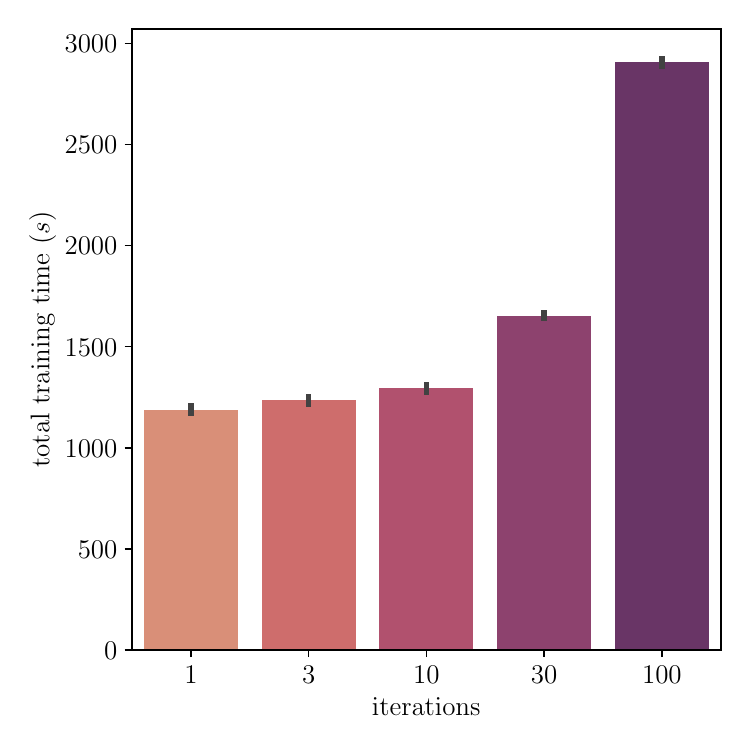}
\end{subfigure}
\caption{(\textbf{left}) Test set DP violation, with a similar experiment setup as Fig.~\ref{fig:results} on the ACSIncome dataset. Each result bar results from a separate training run with the $\KL$-projection fairret that was minimizing the DP violation. The configurations only differ in the maximum number of iterations used in the convex optimizations that compute the actual $\KL$-projections $f^*$ (see Sec.~\ref{sec:projection}). (\textbf{right}) The total training time of these runs with standard error.}
\label{fig:iters}
\end{figure}

\subsection{Projection Visualization}\label{app:projections_viz}
In Fig.~\ref{fig:projections}, we visualize the projected distributions for the example in Fig.~\ref{fig:grads}. The $\KL$- and $\JS$-projections appear to transform the shape of the probability distribution, whereas the $\SED$-projection appear to linearly shift the probabilities within a group. 

For the latter, this appears to form a problem, because its behavior leads to a large gap in densities between $h$ and the projection $f^*$ at the edges of the $[0, 1]$ interval. For example for $S_0 = 1$, the scores are linearly shifted to the left (lower probability scores), meaning that no scores are left in the high probability range, and too many are allocated to the low probability range. The `blocking' of this shift on the low end for $S_0 = 1$ and on the high end for $S_1 = 1$ is in fact the reason for the 'crack' in the gradients of the $\SED$-projection in Fig.~\ref{fig:grads}, since the probabilities of $h$ cannot be projected beyond these edges.

\begin{figure}[tb]
    \centering
    \begin{subfigure}{\textwidth}
        \centering
        \includegraphics[height=0.8cm]{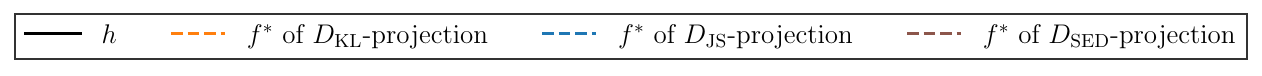}
    \end{subfigure}
    
    \begin{subfigure}{\textwidth}
        \centering
        \includegraphics[width=\linewidth]{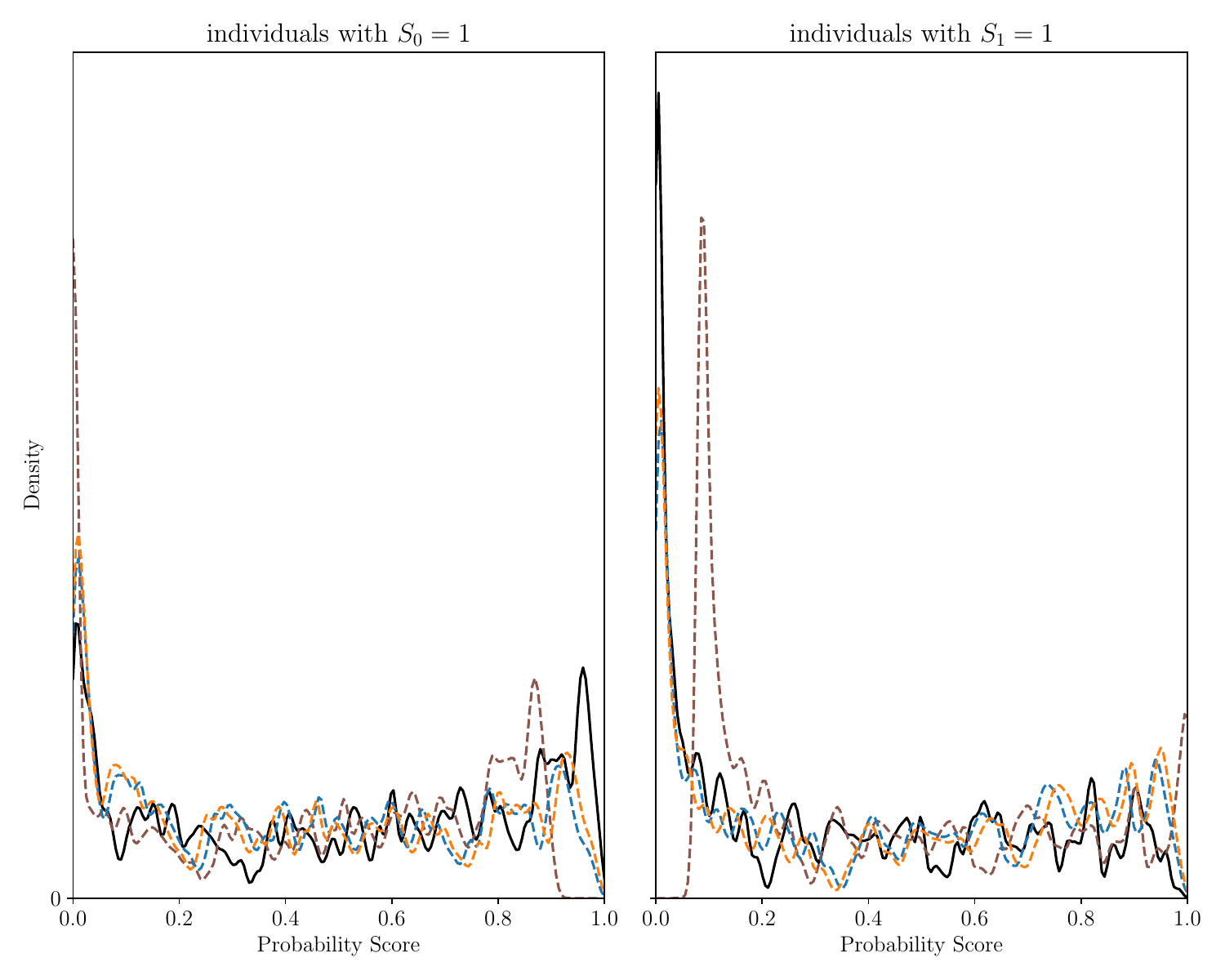}
    \end{subfigure}
    \caption{Starting from the same setup as in Fig.~\ref{fig:grads}, we show the probability scores of both $h$ (full line) and the projected distributions $f^*$ of each projection \textsc{fairret} (dotted lines). The $y$-axis shows the KDE densities of these scores, all on the same scale.}
    \label{fig:projections}
\end{figure}

\subsection{Mini-batching the \textsc{fairret}}\label{app:minibatch}
In Sec.~\ref{sec:analysis}, we propose that \textsc{fairret}s $R_\gamma$ can be minimized using the same mini-batches that we use to minimize the cross-entropy loss $\mathcal{L}_Y$. However, batches clearly need to be large enough to adequately represent the imbalances for all sensitive features $\bS$. Hence, we report an experiment in Fig.~\ref{fig:batches} where we take an unfair classifier $h$ and compute the mean SmoothMax loss over batch sizes with increasing granularity. 

From a batch size of approximately 1024, the mean loss over all batches already closely matches the SmoothMax loss computed over the full test set (39 133 samples). Note that for very small batch sizes, the loss becomes mostly meaningless, as batches will not even contain all members of each protected group. To be on the safe side, we use a batch size of 4096 in all our experiments.

\begin{figure}[tb]
\centering
\includegraphics[width=\textwidth]{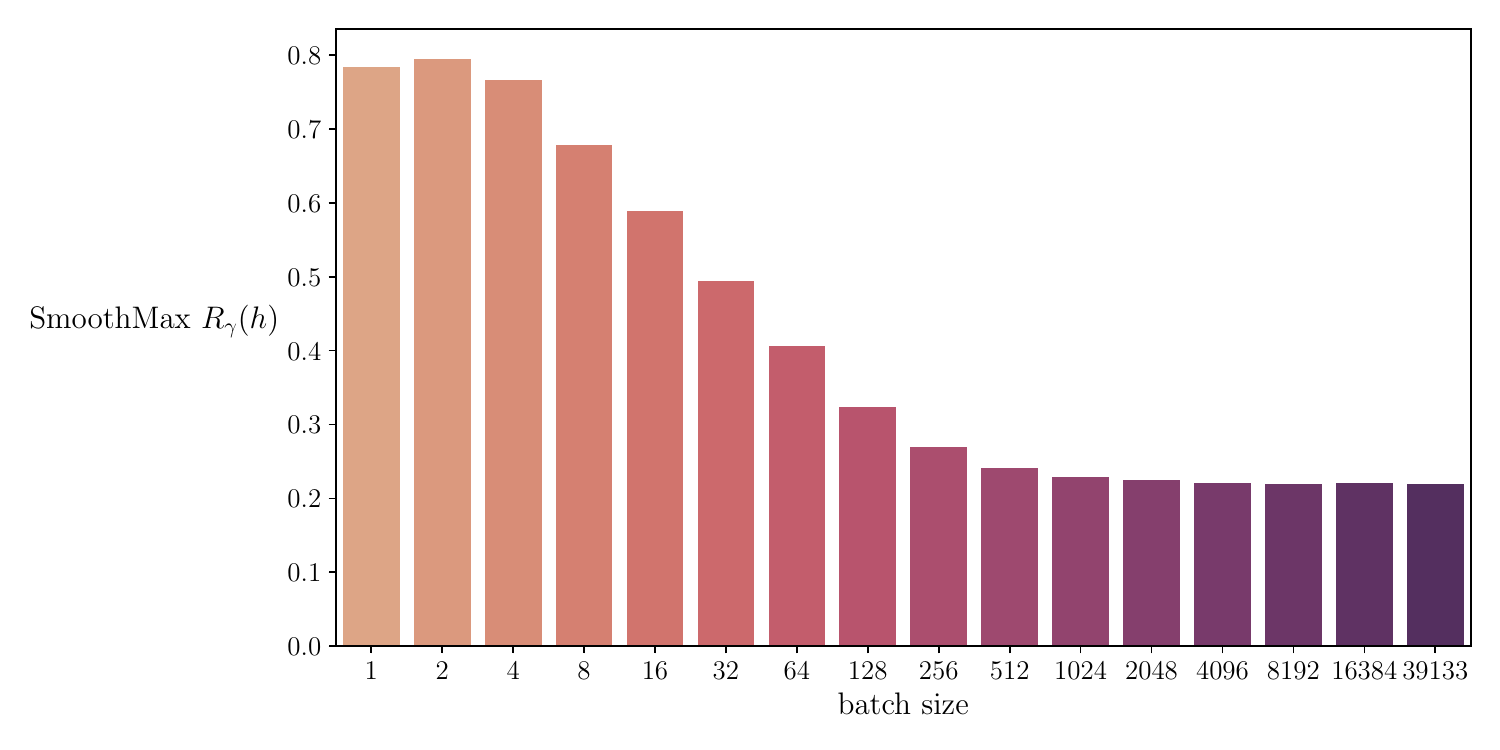}
\caption{Test set SmoothMax loss $R_\gamma(h)$ with $\gamma$ the positive rate statistic (enforcing the DP notion), computed for an unfair model $h$ trained with the same setup as in Fig.~\ref{fig:grads}. Each loss is computed over the entire test dataset, but chunked using different batch sizes. For smaller batch sizes, the mean SmoothMax loss is an overestimate of the actual SmoothMax loss computed over all 39 133 samples.}
\label{fig:batches}
\end{figure}

\subsection{Other \textsc{fairret} Results}\label{app:other_fairrets}
The test set results on the Norm, $\JS$-projection and $\SED$-projection \textsc{fairret}s are shown in Fig.~\ref{fig:other_fairret_results}. The Norm \textsc{fairret} follows a very similar gradient as the SmoothMax \textsc{fairret}, whereas the $\JS$-projection and $\SED$-projection give similar results as the $\KL$-projection. 

\begin{figure}[tb]
	\centering

\begin{subfigure}{\textwidth}
    \centering
	\includegraphics[width=\textwidth]{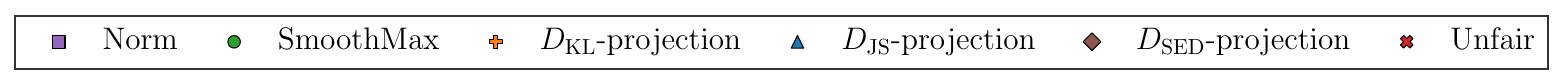}
\end{subfigure}

\begin{subfigure}{0.245\textwidth}
	\includegraphics[width=\textwidth]{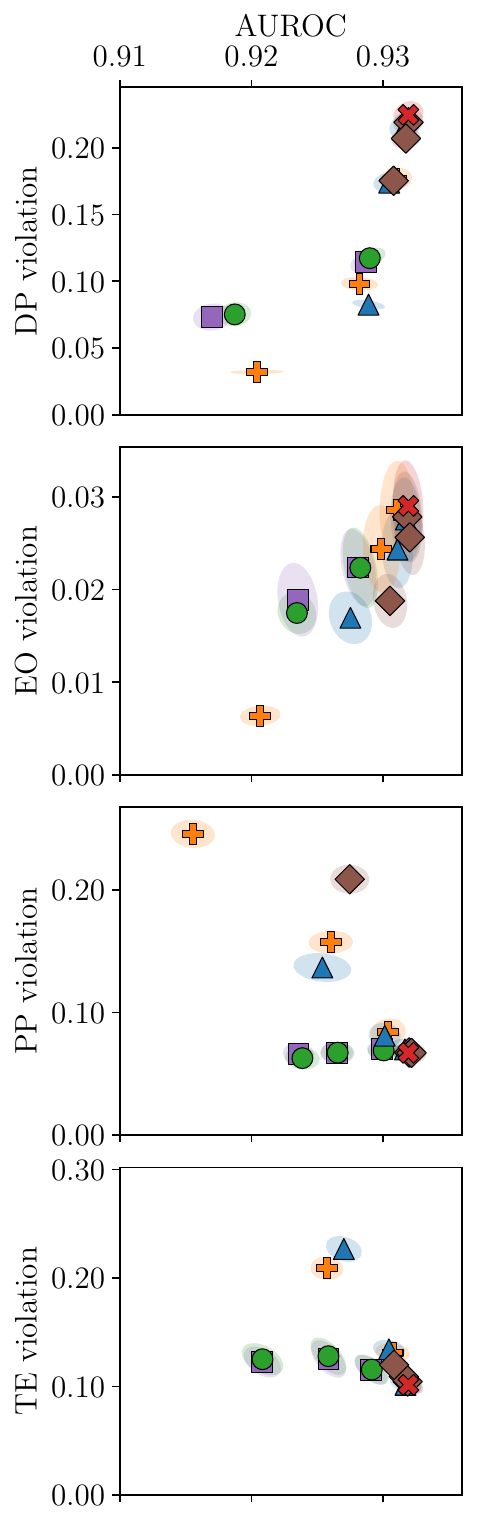}
	\caption{\textit{Bank}}
\end{subfigure}
\hfill
\begin{subfigure}{0.245\textwidth}
	\includegraphics[width=\textwidth]{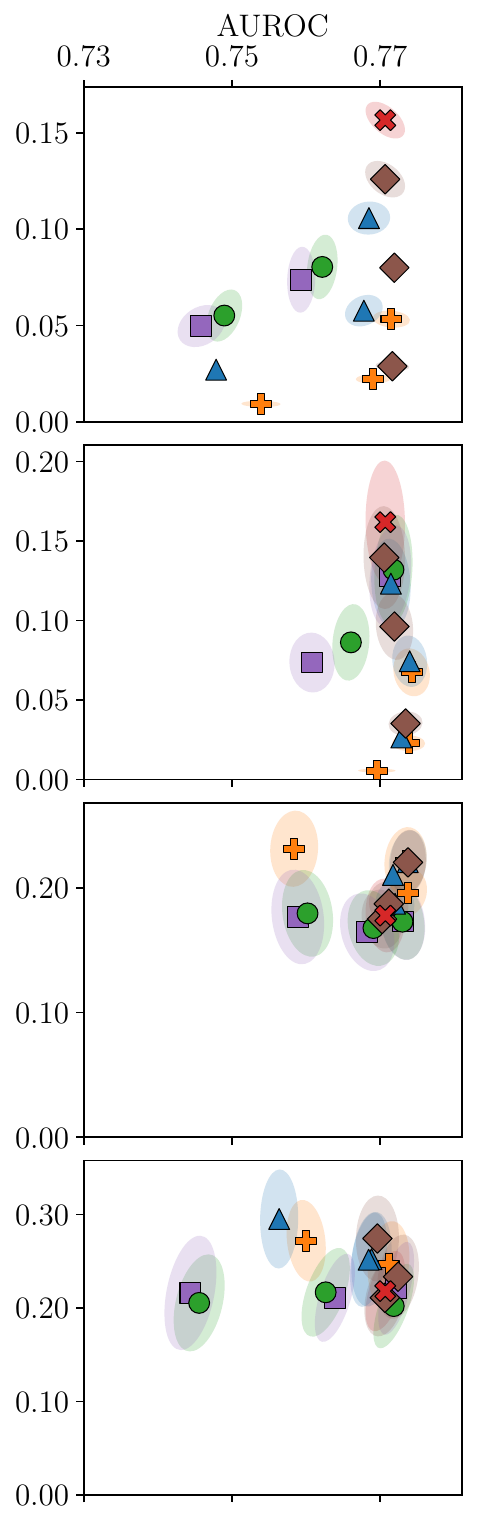}
	\caption{\textit{CreditCard}}
\end{subfigure}
\hfill
\begin{subfigure}{0.245\textwidth}
	\includegraphics[width=\textwidth]{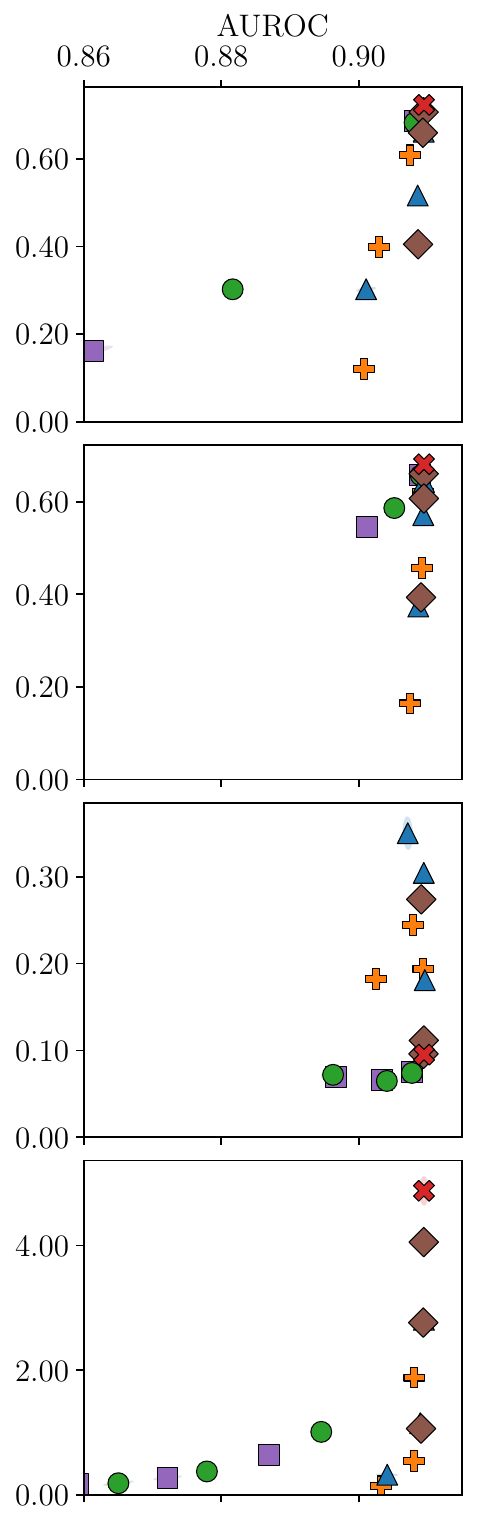}
	\caption{\textit{LawSchool}}
\end{subfigure}
\hfill
\begin{subfigure}{0.245\textwidth}
	\includegraphics[width=\textwidth]{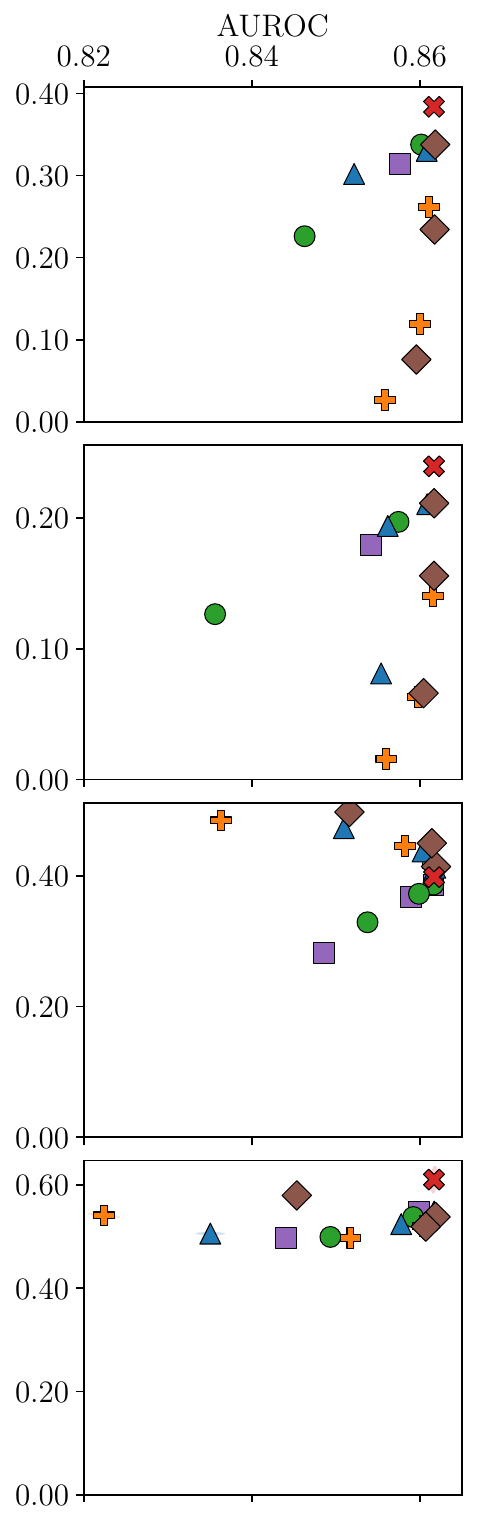}
	\caption{\textit{ACSIncome}}
\end{subfigure}
\caption{Test set results for the experiments in Fig.~\ref{fig:results}, but with different \textsc{fairret}s.}
\label{fig:other_fairret_results}
\end{figure}

\subsection{Train Set Results}\label{app:train_results}
In Fig.~\ref{fig:train_results}, we show the train set results for our main experiments. These follow the same trends as the test set results, though with higher AUROC scores and lower fairness violations. The most important difference is the performance of the SmoothMax fairret, e.g. on the CreditCard dataset, which obtains a significantly lower fairness violation for the difficult PP and TE fairness notions than it did on the test set.

\begin{figure}[tb]
	\centering

\begin{subfigure}{\textwidth}
    \centering
	\includegraphics[height=0.8cm]{figures/legend_4_methods.pdf}
\end{subfigure}

\begin{subfigure}{0.245\textwidth}
	\includegraphics[width=\textwidth]{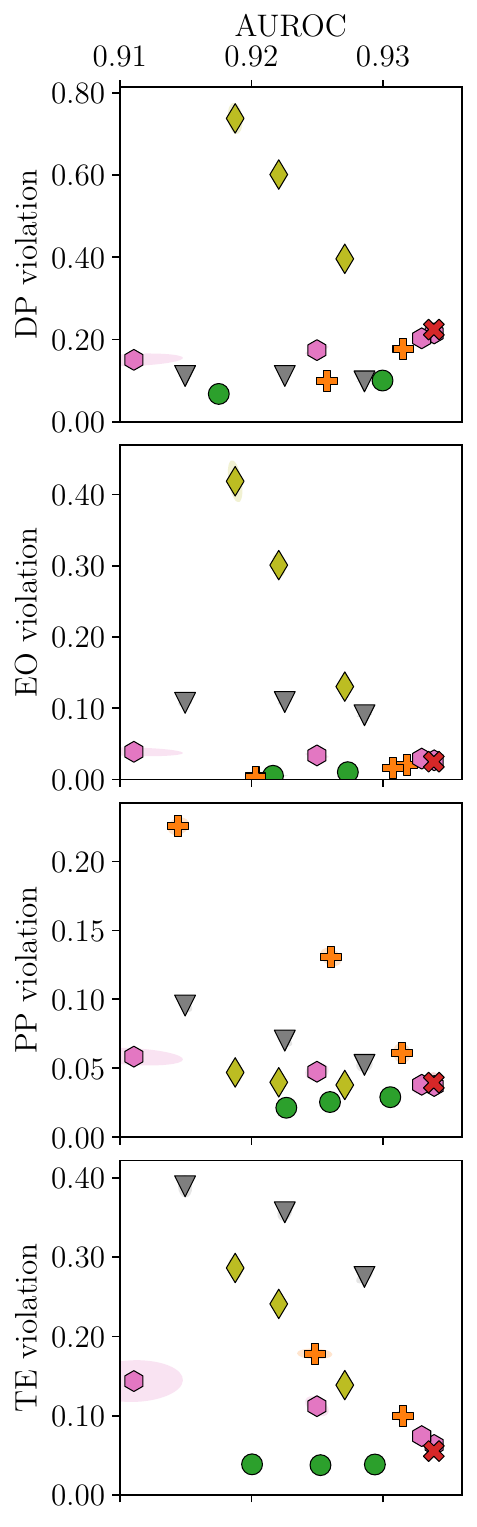}
	\caption{\textit{Bank}}
\end{subfigure}
\hfill
\begin{subfigure}{0.245\textwidth}
	\includegraphics[width=\textwidth]{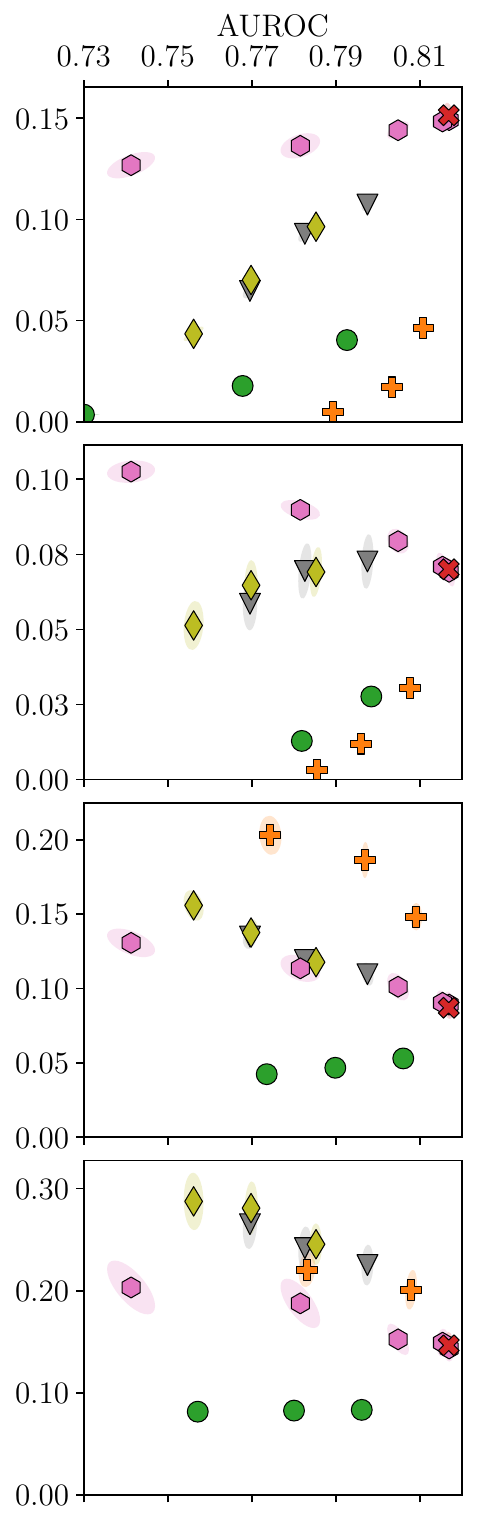}
	\caption{\textit{CreditCard}}
\end{subfigure}
\hfill
\begin{subfigure}{0.245\textwidth}
	\includegraphics[width=\textwidth]{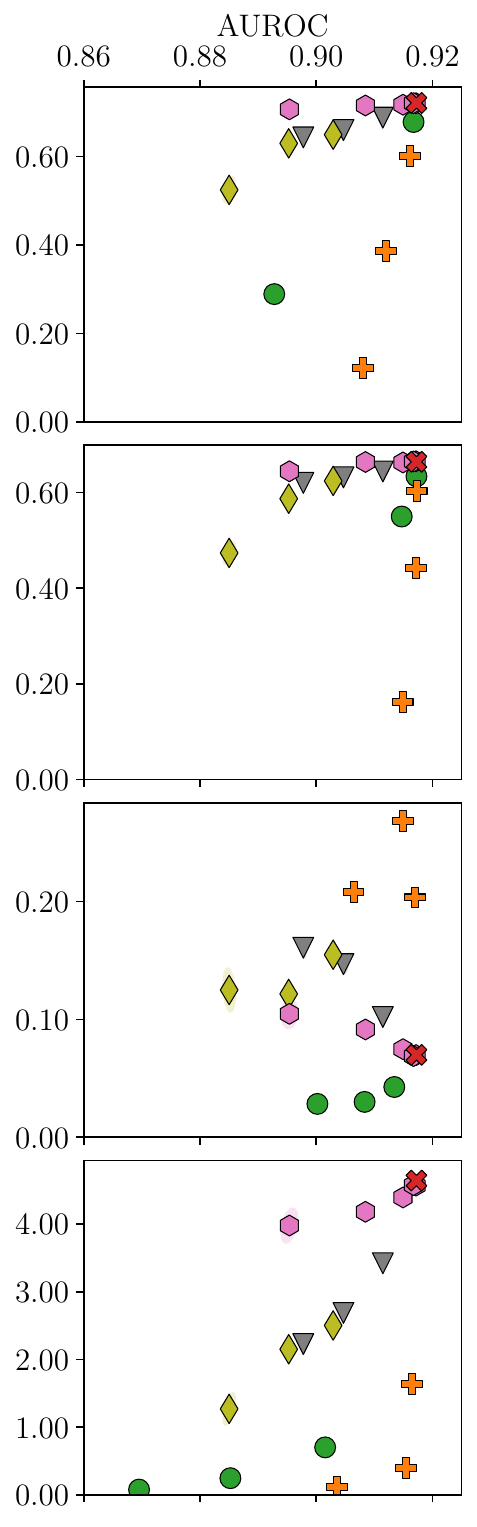}
	\caption{\textit{LawSchool}}
\end{subfigure}
\hfill
\begin{subfigure}{0.245\textwidth}
	\includegraphics[width=\textwidth]{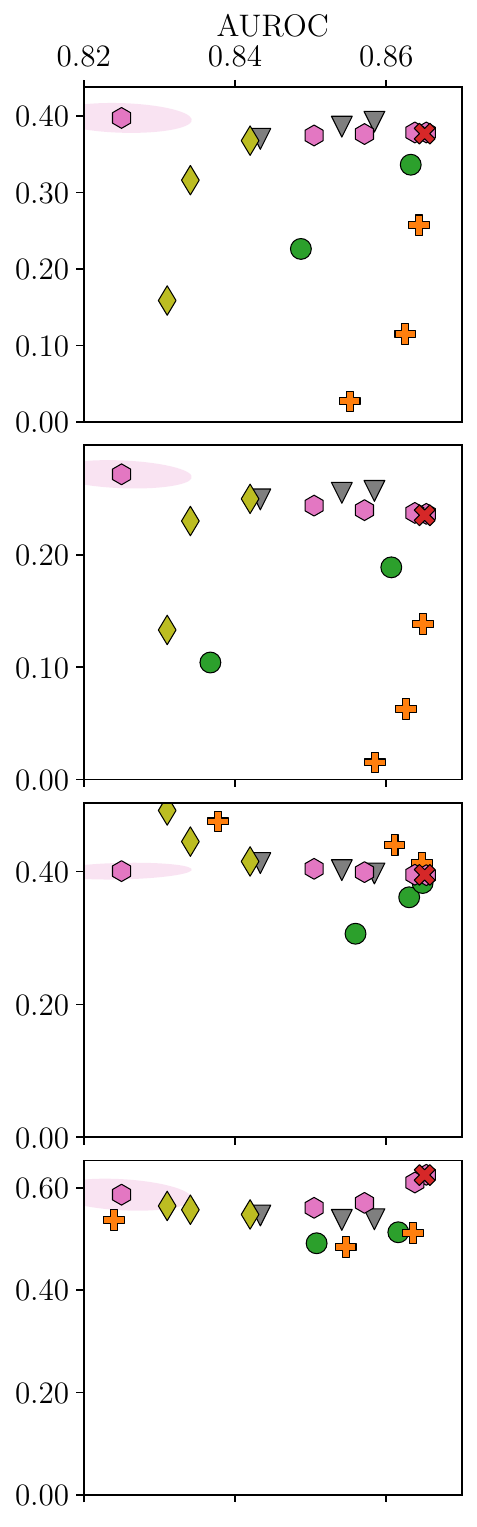}
	\caption{\textit{ACSIncome}}
\end{subfigure}
\caption{The \textit{train} set results corresponding with Fig.~\ref{fig:results}.}
\label{fig:train_results}
\end{figure}

\subsection{Fairness Violations for a Single Sensitive Attribute}\label{app:single_attr}
The FFB implementation (and indeed most fairness tools) only mitigate bias with respect to a single, categorical sensitive feature. In our datasets (see Sec.~\ref{app:data}), these were the following:
\begin{itemize}
    \item Bank: marital status
    \item CreditCard: sex
    \item LawSchool: sex
    \item ACSIncome: sex
\end{itemize}

The fairness violations for these single features are shown in Fig.~\ref{fig:single_results}. Importantly, the \textsc{fairret} experiments were redone by training with only this single feature in mind when computing the \textsc{fairret} loss. Even for the DP fairness notion, the \textsc{fairret}s remain competitive and the $\KL$-projection obtains the best trade-off.

\begin{figure}[tb]
	\centering

\begin{subfigure}{\textwidth}
    \centering
	\includegraphics[width=\textwidth]{figures/legend_4_methods.pdf}
\end{subfigure}

\begin{subfigure}{0.245\textwidth}
	\includegraphics[width=\textwidth]{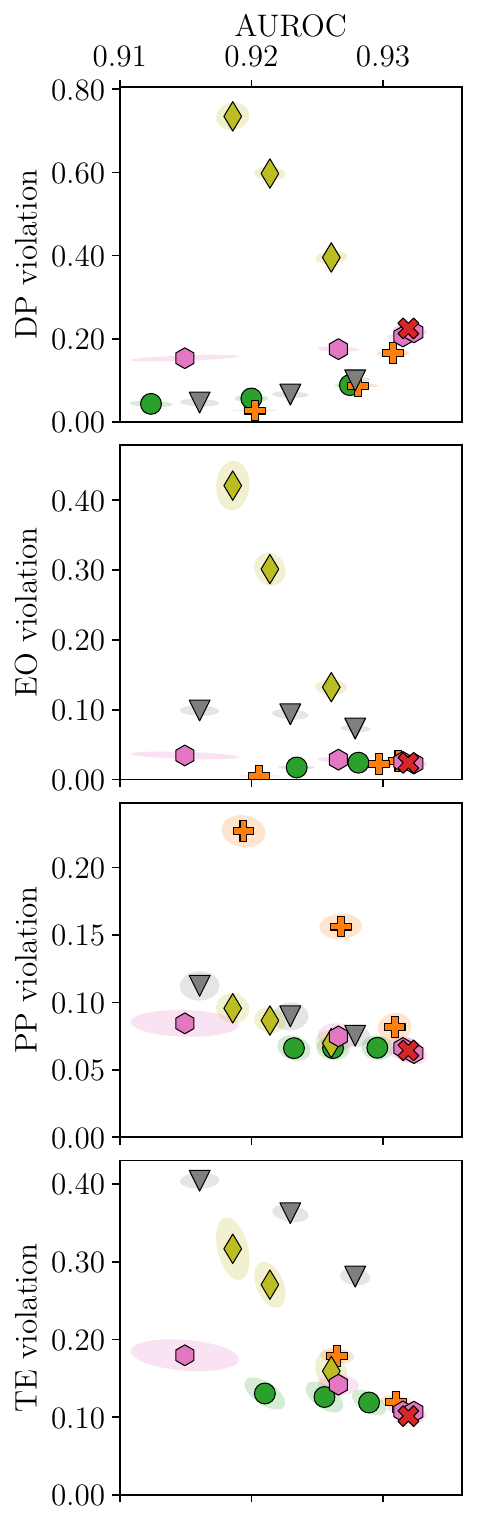}
	\caption{\textit{Bank}}
\end{subfigure}
\hfill
\begin{subfigure}{0.245\textwidth}
	\includegraphics[width=\textwidth]{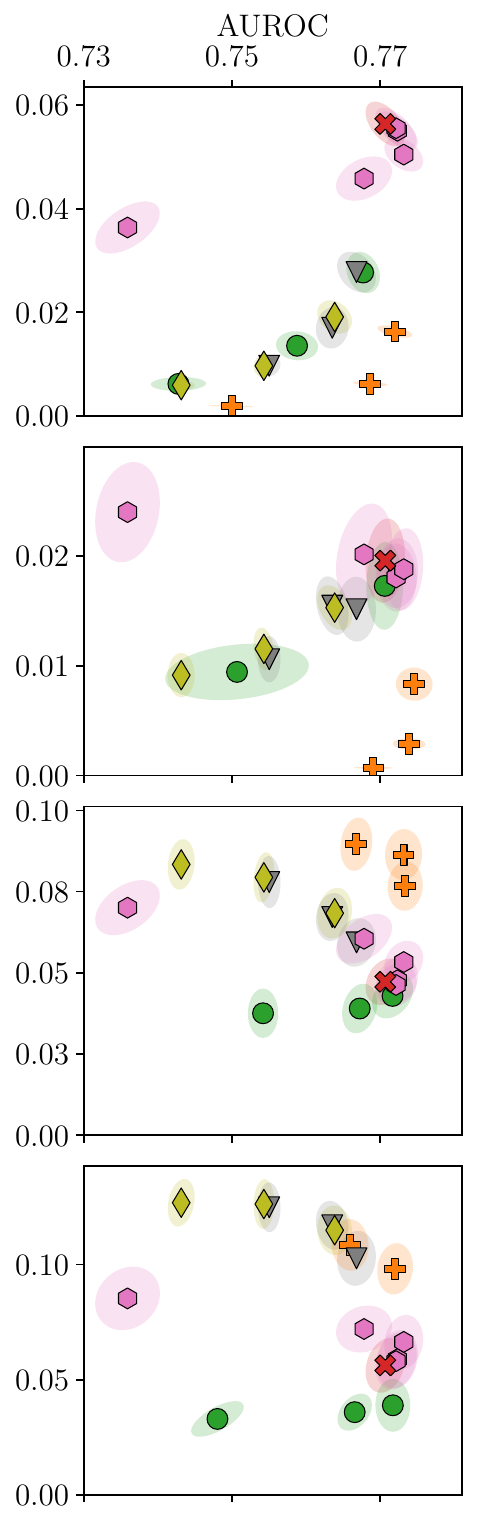}
	\caption{\textit{CreditCard}}
\end{subfigure}
\hfill
\begin{subfigure}{0.245\textwidth}
	\includegraphics[width=\textwidth]{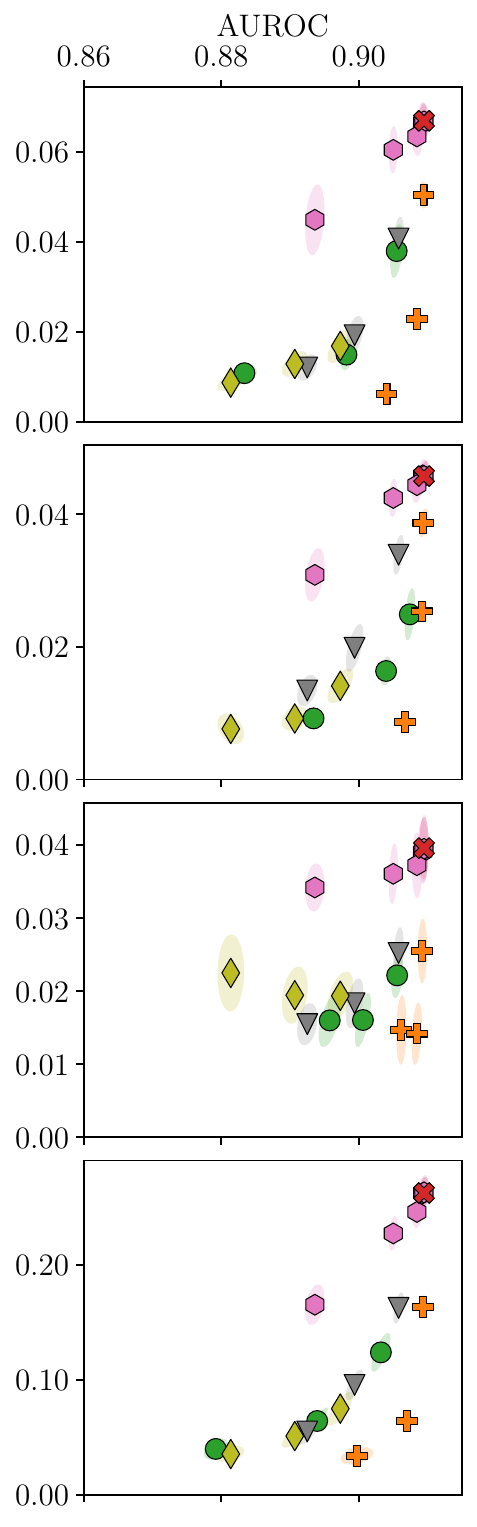}
	\caption{\textit{LawSchool}}
\end{subfigure}
\hfill
\begin{subfigure}{0.245\textwidth}
	\includegraphics[width=\textwidth]{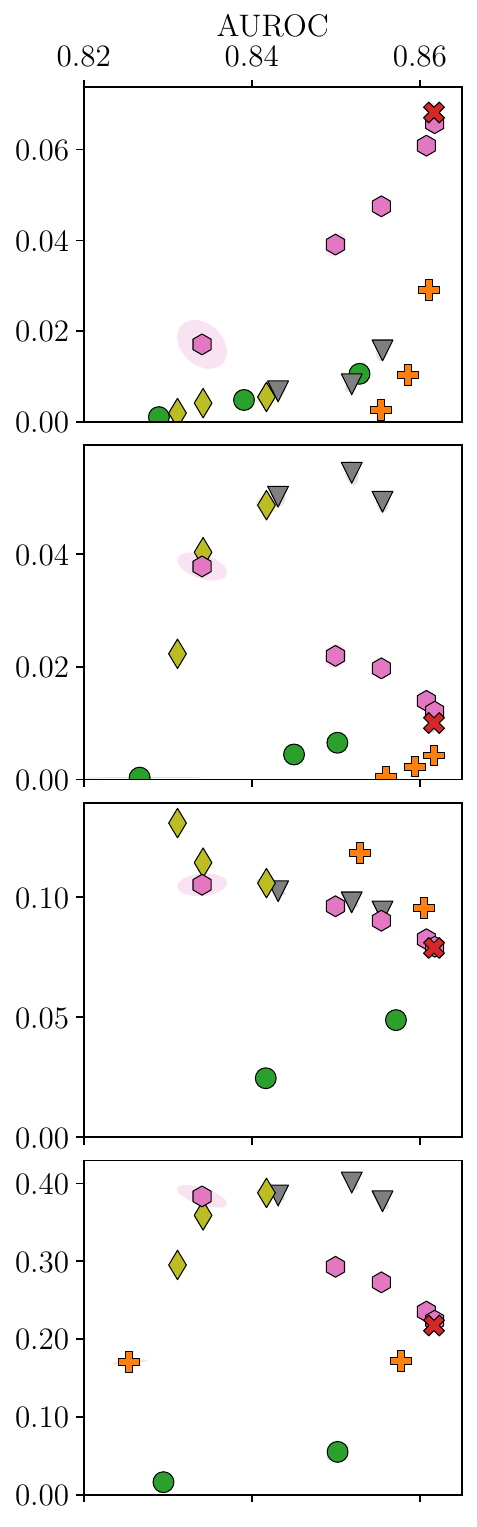}
	\caption{\textit{ACSIncome}}
\end{subfigure}
\caption{Test set results for the experiments in Fig.~\ref{fig:results}, but with fairness violations computed for a single sensitive feature. For these results, \textsc{fairret} experiments were redone and optimized for only this feature specifically.}
\label{fig:single_results}
\end{figure}

\section{Additional Experiment Details}\label{app:exps}
\subsection{Datasets}\label{app:data}
We used four different datasets for the evaluation of our framework, namely the Bank Marketing dataset \citep{Moro_Cortez_Rita_2014}, Credit card clients dataset \citep{Yeh_Lien_2009}, Law School Admissions dataset \footnote{Curated and published by the SEAPHE project} and the ACSIncome dataset from Folktables \citep{ding2021retiring}. Their main advantages are their range of sensitive attributes, their recency and their curation quality. 

We deviate from the normal practice of using the German Credit and Adult data sets as advised by Fabris et al. \citep{fabrisAlgorithmicFairnessDatasets2022} due to the  "contrived prediction tasks, noisy data, severe coding mistakes, limitations in encoding sensitive attributes, and age" of these datasets.

\subsubsection{Bank}
The \textit{Bank marketing} dataset was collected by a Portuguese bank between May 2008 and June 2013. The dataset includes all information the bank has on a client, information about the previous attempt for the client to subscribe for a long-term deposit, some economic information and if the client decided to go for a long-term deposit following the telephone call.  

The dataset itself contains the information of 41 108 telephone contacts. Important to note for this dataset is that the outcomes are severely unbalanced with only 4 640 of outcomes belonging to the positive class. 

The sensitive attributes which were used as such in this dataset were the age and marital status of a person. A person's eduction was not included as a sensitive attribute as discriminating on that value could arguably be justifiable in this situation.  

During preprocessing, we dropped five features: three relating to the outcome of the previous marketing campaign as often there was no record of a previous contact and two features relating to the date when the current marketing call took place. If the value of certain features were unknown then they were were mapped onto the 'False' value, except for the case if the marital status of the person was unknown. In that instance the row was simply dropped, since this only occurred for 80 samples in the entire dataset.

\subsubsection{CreditCard}
The \textit{Credit Card clients} dataset is data from a bank in Taiwan in October 2005. The goal is to predict whether a client would default on their credit in the next month. The features include the allocated credit of the client, personal information, the status of previous payments, amounts of previous payments and the amounts on previous bill statements. 

In total the dataset contains 30 000 records. In 5529 of these instances the client defaulted in the next month.

This dataset contains a wide range of sensitive attributes, namely sex, education, marital status and age. 

A total of 522 samples were removed from the dataset as they contained values unspecified in the documentation of the data or if their eduction status fell under the category others. That was mainly done as only 123 samples were of this category, making it too small to maintain adequate statistical power for the fairness measure of the group. 

\subsubsection{LawSchool}
The \textit{Law School admissions} dataset contains information about whether a student was accepted to the law school, which we tried to predict in our experiments. The dataset also contains the student's race, gender, whether they live where they applied to law school, what college they applied to, the year they did this, and their LSAT and GPA scores. 

Although this dataset is hand-curated by the SEAPHE project, it is still fairly sizeable with 65 535 samples. Only 24.84\% of these samples had a positive outcome. 

The dataset only contains two sensitive attributes: race and sex. This is the only dataset where the age features are not available. Interestingly, the range of values for the race attribute is fairly balanced across all races, except for white which is overly represented with 39 742 samples. 

In the preprocessing step only the year of the application column was not used as a feature.

\subsubsection{ACSIncome}
The \textit{ACSIncome} dataset has the familiar goal of predicting whether an individual's annual income is above \$50 000. The source of the data is the US census study. This only includes individuals above the age of 16, who indicate having worked at least one hour per week and reported an income above \$100. A myriad of personal information is available in the dataset. 

The dataset is significantly larger than the others used, with 195 665 samples. It is also more balanced as it has 80 335 positive samples. 

Four sensitive attributes are used for the calculations. In this case age, marital status, sex and race. Information about employment and education is not included as sensitive attributes. 

Due to the large amount of race groups in the survey, it is necessary to simplify this trait in order to have each group be at least a total of 1\% of the data set to guarantee statistical significance for each group.


\subsection{Hyperparameters}
In addition to the hyperparameters discussed in Sec.~\ref{sec:setup}, we also mention that our model was optimized with the \textit{Adam} optimizer implementation of PyTorch, with a learning rate of 0.001 and a batch size of 4096. The loss was minimized over 100 epochs, with $\lambda = 0$ for the first 20 to avoid constraining $h$ before it learns anything.

To find these hyperparameters, we took the 80\%/20\% train/test split already generated for each seed, and further divided the train set into a smaller train set and a validation set with relative sizes 80\% and 20\% respectively. Keeping the \textsc{fairret} strength $\lambda = 0$, we performed a grid search on the neural architecture in range
\begin{equation}
    [[], [128], [128, 32], [128, 64, 32], [256, 128, 32], [512, 256, 128, 32], [512, 512, 256, 128, 32]]
\end{equation}
in combination with the learning rate in range $[0.01, 0.005, 0.001, 0.0005, 0.0001]$. We then selected the combination with the best AUROC on the validation set of the Bank dataset. The number of epochs and the batch size were tuned manually based on the convergence properties of the validation AUROC.

\subsection{FFB Baselines}
For all FFB \citep{hanFFBFairFairness2023} baselines, we used the publicly available implementation\footnote{\url{https://github.com/ahxt/fair_fairness_benchmark/commit/abec4de80455831ce8d2e158629dfb738a572201}} with only minor adjustments to make them fit in our experiment pipeline. 

From their implementation, we used the following baselines.

\begin{itemize}
    \item HSIC minimizes the Hilbert-Schmidt Independence Criterion between the model's prediction probabilities and the sensitive attributes \citep{perez-suayFairKernelLearning2017}.
    \item PRemover minimizes the mutual information between the model's prediction probabilities and the sensitive attributes \citep{kamishimaFairnessAwareClassifierPrejudice2012}.
    \item AdvDebias maximizes an adversary's cross entropy between its predictions for the sensitive attribute and the actual sensitive attribute, given the last hidden layer of $h$ \citet{adelOneNetworkAdversarialFairness2019}. after trying several configurations, we achieved the most stable results with an adversarial net of one hidden layer of size 32. 
\end{itemize}

We again stress these implementations only optimize for the specific fairness notion listed above and only for a single categorical sensitive attribute. 

At the time of writing, the FFB also contains implementations for a regularization term that minimizes the gap to obtain DP and EO, but this is highly similar to our violation fairrets and would not be an informative comparison.

\subsection{Confidence Ellipses}
The confidence ellipses we use in Fig.~\ref{fig:grads}, Fig.~\ref{fig:other_fairret_results} and Fig.~\ref{fig:train_results} are uncommon in machine learning literature. Yet, they work well for our purpose of comparing trade-offs between metrics that may be noisy depending on randomness during training and dataset split selection. 

Recall that 1-dimensional confidence intervals typically assume a mean estimator to be normally distributed. The confidence interval then denotes the uncertainty of the sample mean using the standard error. Similarly, confidence ellipses assume a 2-dimensional point, i.e. the 2-dimensional mean estimator, to have a multivariate normal distribution that can be characterized through the sample mean and standard error statistics. 

Our implementation of the confidence ellipses follows a featured implementation on \texttt{matplotlib}\footnote{\url{https://matplotlib.org/3.7.0/gallery/statistics/confidence_ellipse.html}.}. However, a crucial difference is that this implementation computes a confidence interval for a 2-dimensional random variable based on the covariance matrix for the standard \textit{deviation} of samples of that variable. Following observations by Schubert and Kirchner \citep{schubertEllipseAreaCalculations2014}, we instead want to show the uncertainty of the mean estimator, which should use the standard deviation of that estimator, i.e. the covariance for the standard \textit{error}. This is accomplished by dividing the covariance matrix in the \texttt{matplotlib} implementation by the number of seeds (5) we use in our experiments.

\subsection{Computation Cost and Runtimes}
We report some of the runtimes in Fig.~\ref{fig:runtime} that illustrate the difference in computational cost between \textsc{fairret}s and baselines during the main experiments of Sec.~\ref{sec:exps}. Note that both our \textsc{fairret} implementation and the FFB implementation were designed for intuitive use in research and not yet optimized for runtime speed. 

\begin{figure}[htb]
\centering
\includegraphics[width=0.8\textwidth]{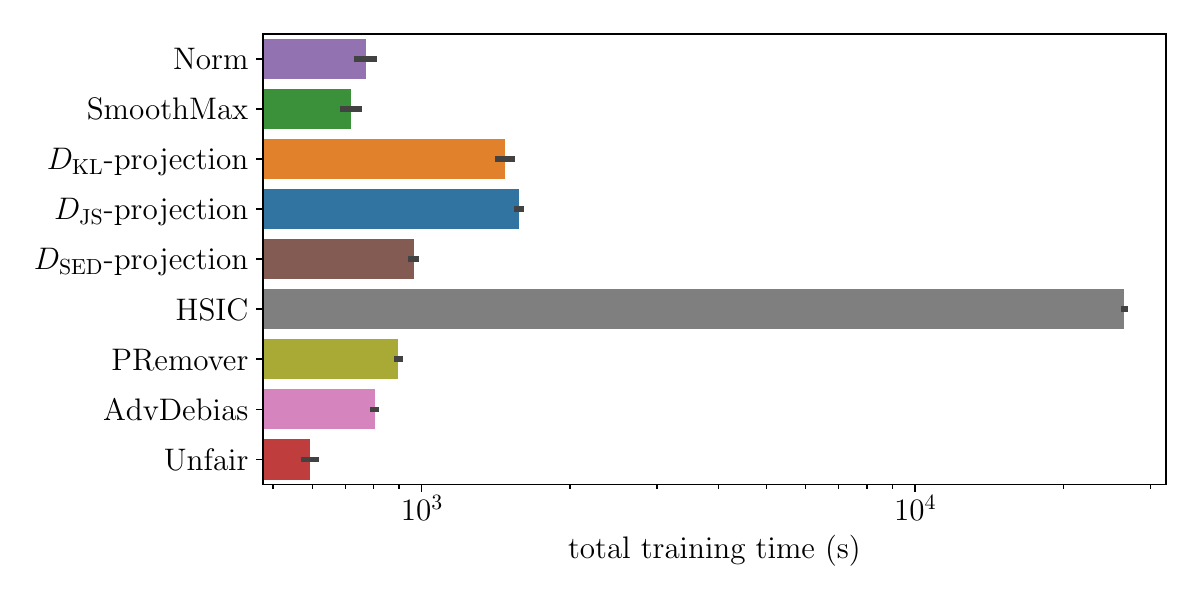}
\caption{Runtimes for the ACSIncome dataset experiments discussed in Sec.~\ref{sec:exps} with strength $\lambda = 1$ (except for the \textit{Unfair} baseline). The \textsc{fairrets} were optimizing the DP fairness notion. Note the log scale.}
\label{fig:runtime}
\end{figure}

All experiments in Sec.~\ref{sec:exps} were conducted on an internal server equipped with a 12 Core Intel(R) Xeon(R) Gold processor and 256 GB of RAM. All experiments, including preliminary and failed experiments, cost approximately 100 hours per CPU.

\section{Code Use Examples}\label{app:code}
\lstinputlisting[language=Python,caption={Example use of the \textsc{fairret} library in a simple PyTorch setup.},label={alg:use},float=tbh]{example_use.py}

Listing~\ref{alg:use} displays a code example of how the \textsc{fairret} can easily be deployed in a typical PyTorch \citep{paszkePyTorchImperativeStyle2019} setup. It suffices to simply load a subclass of \texttt{LinearFractionalStatistic} and pass it on to a \textsc{fairret} implementation instance such as \texttt{NormLoss} (as defined in Def.~\ref{def:norm}). The \textsc{fairret} is then used to compute the quantification of unfairness as a loss like any other in PyTorch. In this case, we use the true positive rate statistic to pursue the fairness notion of equalized opportunity (EO).

In Listing~\ref{alg:stat}, we provide an example implementation of a \texttt{LinearFractionalStatistic} class, which only entails the specification of the $\alpha$ and $\beta$ functions as in Table~\ref{tab:fairalphabeta}.

\lstinputlisting[language=Python,caption={Example implementation of a custom LinearFractionalStatistic.},label={alg:stat},float=tbh]{example_stat.py}

\end{document}